\documentclass{scrartcl}
\usepackage[utf8]{inputenc}
\usepackage[small]{caption}
\usepackage{graphicx}
\usepackage{amsmath}
\usepackage{amsthm}
\usepackage{booktabs}
\usepackage{algorithm}
\usepackage{placeins}
\usepackage[switch]{lineno}
\usepackage[table]{xcolor}
\usepackage{pdflscape}

\usepackage{algpseudocode}

\usepackage{multirow}

\usepackage{xcolor}
\definecolor{darkgreen}{rgb}{0,0.5,0}
\usepackage{tikz}
\usepackage{amssymb}
\usepackage{makecell}
\usepackage{float}
\usepackage{geometry}

\usepackage{hyperref}
\hypersetup{
    colorlinks,
    linkcolor={red!50!black},
    citecolor={blue!50!black},
    urlcolor={blue!50!black}
}
\usepackage[nameinlink]{cleveref}

\AddToHook{shipout/lastpage}{\thispagestyle{empty}}

\newtheorem{proposition}{Proposition}

\newtheorem{definition}{Definition}

\newcommand{\RR}{\mathbb{R}}

\newcommand{\nonnegReals}{\RR_{\geq 0}}

\DeclareMathOperator{\Plane}{Plane}
\newcommand{\numPlaneInv}{\Plane_{\#}}
\newcommand{\numPlaneFun}[1]{\numPlaneInv(#1)}
\DeclareMathOperator{\Sphere}{Sphere}
\newcommand{\numSphereInv}{\Sphere_{\#}}
\newcommand{\numSphereFun}[1]{\numSphereInv(#1)}
\newcommand{\mbezout}{\textnormal{m-Bézout}}

\newcommand{\rigInv}{\mathcal{R}}

\newcommand{\nGraphs}{\mathcal{M}_n}

\DeclareMathOperator{\NAC}{NAC}
\newcommand{\numNACinv}{\NAC_{\#}}
\newcommand{\numNACfun}[1]{\numNACinv(#1)}

\newcommand{\LDP}[1]{\mathrm{LDP}(#1)}

\newcommand{\doi}[1]{DOI:\href{https://doi.org/#1}{#1}}

\colorlet{vcol}{black!95!white}
\colorlet{ecol}{black!70!white}
\definecolor{col1}{HTML}{BBBBBB}
\definecolor{col2}{HTML}{EE6677}
\definecolor{col3}{HTML}{228833}
\definecolor{col4}{HTML}{CCBB44}
\definecolor{col5}{HTML}{66CCEE}
\definecolor{col6}{HTML}{AA3377}
\definecolor{col7}{HTML}{4477AA}

\tikzstyle{vertex}=[fill=vcol,circle,inner sep=0pt, minimum size=4pt]
\tikzstyle{smallvertex}=[draw=black, circle, fill=white, line width=1pt, inner sep=0pt, minimum size=3pt]
\tikzstyle{lvertex}=[fill=white, draw=vcol,circle,inner sep=0.5pt, minimum size=7pt]
\tikzstyle{edge}=[line width=1.5pt,ecol]
\tikzstyle{thinedge}=[edge, line width=1.2pt]
\tikzstyle{graphIndication}=[line width=1pt, gray, densely dotted]

\newcommand{\robot}[4]{%
    \begin{scope}[shift={#1}, rotate=#2, scale=#3]
        \draw[thick, fill=gray!20] (0,0) circle (0.4);
        
        \draw[thick, fill=black!40] (-0.45,-0.3) rectangle (-0.25,0.3);
        \draw[thick, fill=black!40] (0.25,-0.3) rectangle (0.45,0.3);
        
        \node[lvertex] at (0,0) {\scriptsize $#4$};
    \end{scope}
}

\title{Learning Minimally Rigid Graphs \\ with High Realization Counts}

\author{%
    Oleksandr Slyvka%
    \thanks{Faculty of Information Technology, Czech Technical University in Prague, Czechia, \\\texttt{\{slyvkole, rodrigo.alves, jan.legersky\}@fit.cvut.cz}}
    \\[2pt]
    Rodrigo Alves$^*$
    \and
    Jan Rubeš%
    \thanks{ETH Z\"urich, Switzerland, \texttt{jrubes@ethz.ch}}
    \\[2pt]
    Jan Legerský$^*$}
\date{\small This is an extended version of the paper accepted to IJCAI 2026.}

\begin{document}
\maketitle

\begin{abstract}
For minimally rigid graphs, the same edge-length data can admit multiple realizations (up to translations and rotations). Finding graphs with exceptionally many realizations is an extremal problem in rigidity theory, but exhaustive search quickly becomes infeasible due to the super-exponential growth of the number of candidate graphs and the high cost of realization-count evaluation. We propose a reinforcement-learning approach that constructs minimally rigid graphs via $0$- and $1$-extensions, also known as Henneberg moves. We optimize realization-count invariants using the Deep Cross-Entropy Method with a policy parameterized by a Graph Isomorphism Network encoder and a permutation-equivariant extension-level action head. Empirically, our method matches the known optima for planar realization counts and \emph{improves the best known bounds} for spherical realization counts, yielding new record graphs.
\end{abstract}

\section{Introduction}

Consider a fixed team of $n$ robots moving collaboratively in the plane. Each robot is equipped with only a few sensors (typically far fewer than $n-1$) that measure and enforce fixed distances to \emph{some} of the others, thereby imposing a set of pairwise distance constraints on the formation. From these distances alone, can we guarantee that the formation cannot \emph{continuously} deform into a different shape while keeping all measured distances unchanged, so that the robots can reliably synchronize their motion? The sensing pattern is captured by a \emph{measurement graph} whose vertices represent robots and whose edges indicate pairs for which a distance is available. Mathematically, this is formalized by notions from Rigidity Theory~\cite{handbook}: a planar realization of the measurement graph is called \emph{rigid} if it admits no nontrivial continuous deformation that preserves all edge lengths; otherwise, it is called \emph{flexible}; see \Cref{fig:rigid_vs_flexible}.

\begin{figure}[h]
    \centering
    \begin{tikzpicture}[scale=0.7]
        \node[lvertex] (A) at (0,0) {};
        \node[lvertex] (B) at (2,0) {};
        
        \draw[thick,dashed] ([shift=(80:2cm)] A) arc (80:160:2cm);
        \draw[thick,dashed] ([shift=(80:2cm)] B) arc (80:160:2cm);
        
        \node[lvertex] (C) at ([shift=(120:2cm)] B) {};
        \node[lvertex] (D) at ([shift=(120:2cm)] A) {};
        
        \robot{(A)}{0}{1.0}{1}
        \robot{(B)}{-30}{1.0}{2}
        \robot{(C)}{120}{1.0}{3}
        \robot{(D)}{120}{1.0}{4}
        
        \draw[edge] (A)--(B) (B)--(C) (C)--(D) (D)--(A);
    \end{tikzpicture}
    \qquad
    \begin{tikzpicture}[scale=0.7]
        \node[lvertex] (A) at (0,0) {};
        \node[lvertex] (B) at (2,0) {};

        \node[lvertex] (C) at ([shift=(120:2cm)] B) {};
        \node[lvertex] (D) at ([shift=(120:2cm)] A) {};
        
        \robot{(A)}{0}{1.0}{1}
        \robot{(B)}{-30}{1.0}{2}
        \robot{(C)}{120}{1.0}{3}
        \robot{(D)}{120}{1.0}{4}
        
        \draw[edge] (A)--(B) (B)--(C) (C)--(D) (D)--(A) (A)--(C);
    \end{tikzpicture}
    \caption{The realization on the left is flexible: when the robots $3$ and $4$
    slide simultaneously along the indicated circles, the measured distances indicated by edges
    are preserved. The right realization is rigid since the only possible motions preserving the
    measured distances are rotations or translations of the whole formation. The orientation of individual robots does not play any role.}
    \label{fig:rigid_vs_flexible}
\end{figure}

\begin{figure}[h]
    \centering
    \begin{tikzpicture}[scale=1.1]
        \node[lvertex] (A) at (0,0) {};
        \node[lvertex] (B) at (-1.3, 1) {};

        \node[lvertex] (C) at (0,1) {};
        \node[lvertex] (D) at (-2,0) {};
        
        \robot{(A)}{0}{0.64}{1}
        \robot{(B)}{-90}{0.64}{2}
        \robot{(C)}{90}{0.64}{3}
        \robot{(D)}{120}{0.64}{4}
        
        \draw[edge,col7] (A)--(B);
        \draw[edge,col2] (B)--(C);
        \draw[edge,col3] (C)--(D);
        \draw[edge,col4] (D)--(A);
        \draw[edge,col5] (A)--(C);
    \end{tikzpicture}
    \qquad\qquad
    \begin{tikzpicture}[scale=1.1]
        \node[lvertex] (A) at (0,0) {};
        \node[lvertex] (B) at (1.3, 1) {};

        \node[lvertex] (C) at (0,1) {};
        \node[lvertex] (D) at (-2,0) {};
        
        \robot{(A)}{0}{0.64}{1}
        \robot{(B)}{-90}{0.64}{2}
        \robot{(C)}{90}{0.64}{3}
        \robot{(D)}{120}{0.64}{4}
        
        \draw[edge,col7] (A)--(B);
        \draw[edge,col2] (B)--(C);
        \draw[edge,col3] (C)--(D);
        \draw[edge,col4] (D)--(A);
        \draw[edge,col5] (A)--(C);
    \end{tikzpicture}
    \caption{Two configurations of four robots with the same measurement graph and identical measured edge lengths (indicated by colors). Equivalently, they are two non-congruent realizations of the same edge-length-labeled graph. Two additional realizations can be obtained by reflecting each configuration.}
    \label{fig:multiple_realizations}
\end{figure}

However, even if a realization is rigid, the same edge-length data may still admit (finitely) many distinct realizations \emph{up to rigid rotations and translations}; see \Cref{fig:multiple_realizations}.
Determining the number of these realizations has attracted a lot of attention:
from (asymptotic) bounds (e.g.~\cite{BorceaStreinu,BartzosEmirisTzamos,Clarke2025TropicalRigidity}),
via exact numbers for special cases \cite{JacksonOwen2019} and algorithms for computation~\cite{capco2018laman,algSphere,GraseggerHilanyLubbes,DewarGrassegerSchichoTewariWarren2025},
to computational search for graphs maximizing it~\cite{GraseggerKoutschanTsigaridas,capco2024lnumber,grasegger2025explorations}.
For the robots, this means that distance measurements alone need not single out a unique placement of the team; instead, they may determine a \emph{finite set} of feasible formations that all satisfy the same sensing constraints.
This multiplicity is not only a source of ambiguity, but can also be seen as a form of \emph{built-in variability}: the robots may be positioned in different geometric arrangements while preserving the \emph{same set of measured distances}.
For instance, in bar-and-joint linkages (robots as joints, distances as bars), this non-uniqueness has been used to design multistable compliant structures~\cite{ZhangAuzingerBickel}.
Moreover, graph rigidity properties have been applied in various fields beyond multi-robot control
\cite{drones,robots}, including sensor network localization \cite{sensors}, 
molecular biology \cite{biology1,biology2}, and architecture \cite{architecture1}.

In this work, we study the problem of finding \emph{minimally rigid} graphs with high values of selected rigidity invariants, such as the finite number of planar realizations up to rotations and translations. On the one hand, this is an extremal optimization problem over a rapidly growing combinatorial domain (e.g., $\sim\!10^{12}$ minimally rigid graphs with $n=15$ vertices;~\cite{nauty_plugin}), making exhaustive enumeration and evaluation for large $n$ infeasible. On the other hand, improving empirical bounds and understanding the structure of extreme graphs can inform the design of measurement graphs in applications and shed light on the combinatorial and algebraic mechanisms that govern rigidity phenomena. Prior work has relied either on exhaustive search for small $n$~\cite{capco2024lnumber}, or \emph{extensive} computations combined with expert knowledge~\cite{GraseggerKoutschanTsigaridas,grasegger2025explorations}.

In contrast to previous work, we employ reinforcement learning (RL) to search for minimally rigid graphs with exceptionally many realizations.
Concretely, constructing a minimally rigid graph $G_n$ on $n$ vertices can be formulated as a sequential decision-making problem
using so called $0$- and $1$-\emph{extensions} (introduced by~\cite{Henneberg})
that add a new vertex each time.
At step $k$, the current intermediate rigid graph $G_k$ defines a \textit{state} $s_k \in \mathcal{S}$; each admissible extension of $G_k$ that yields a rigid graph $G_{k+1}$ corresponds to an \textit{action} $a_k \in \mathcal{A}$; and the value of a chosen rigidity invariant (e.g., the number of realizations of the terminal graph~$G_n$) serves as the \textit{reward}.
The objective is to learn a policy $\pi$ that, given a state $s_k$, selects actions $a_k$ that transform $G_k$ into $G_{k+1}$, and repeats this process until reaching a final graph $G_n$ with high reward. We address this problem via the Deep Cross-Entropy Method (Deep CEM), which iteratively updates a stochastic policy over graph extensions by biasing sampling toward elite construction trajectories.
To represent intermediate graphs, we employ a Graph Isomorphism Network (GIN) encoder, and we design an extension-level, permutation-equivariant policy module so that action probabilities transform consistently under vertex relabeling. Our main contributions can be summarized as follows:

\begin{itemize}
    \item  We formulate the construction of minimally rigid graphs with large realization counts as a sequential decision-making problem and, to our best knowledge, we are the first to propose a RL framework\footnote{See \url{https://github.com/Esestree/RL-min-rigid-graphs}} with permutation-equivariant action representations for its solution.

    \item We demonstrate through extensive empirical study that our approach matches the known optima in the planar case and improves the best known lower bounds on the sphere for larger $n$, including \textbf{new record graphs} for spherical realization counts. 
    
    \item Of particular relevance, our method \emph{is orders of magnitude faster} than previous attempts and yields extremal graph instances that can potentially provide evidence toward a better understanding of structural phenomena in rigidity theory, which we further discuss (see Results in \Cref{sec:experiments} and \Cref{sec:extension_impact}).

    \item We study the  transfer-learning properties of our method (\Cref{sec:transfer_learning}) and ablate our encoder (\Cref{sec:ablation}).
\end{itemize}

\section{Preliminaries and Problem Definition}

\paragraph{Basic Notation.}
We consider only simple undirected graphs.
Let $G = (V, E)$ be a graph with $n = |V|$ vertices and $|E|$ edges.
We denote the edge formed by vertices $u$ and~$v$ by $uv\in E$.
Since the edges are undirected, $uv\in E$ is the same as $vu \in E$.
We denote by~$K_n$ the clique on $n$ vertices. A summary of the notation is provided in the appendix in \Cref{tab:notation}.

\subsection{Minimally Rigid Graphs}

We briefly recall the definition of \emph{minimally rigid graphs} for completeness and to fix notation. To this end, we first introduce flexible and rigid realizations, and then define generic rigidity and minimal rigidity. For a comprehensive treatment, we refer to~\cite[Part~IV]{handbook}.

\begin{definition}[Flexible and Rigid Realizations]
  A \emph{planar realization} of a graph $G=(V,E)$ is a map $p\colon V \rightarrow \RR^2$.
  It is called \emph{flexible}
  if it has a nontrivial \emph{flex}, that is, a continuous path of planar realizations
  $p_t$, $t \in [0, 1]$, with $p_0 = p$ such that for all $t \in (0, 1]$,
  $||p(u)-p(v)||=||p_t(u)-p_t(v)||$ holds for every edge $uv\in E$, but $||p(u)-p(v)||\neq ||p_t(u)-p_t(v)||$ for some vertices $u,v\in V$.
  Otherwise, the realization is called \emph{rigid}.
\end{definition}

In particular, for flexible realizations, the requirement that the flex is nontrivial rules out translations and rotations of the entire framework and ensures that the deformation changes the shape of the realization.

\begin{definition}[Generic and Minimal Rigidity]
  A planar realization is \emph{generic} if the coordinates of the vertices are algebraically independent over the rational numbers.
  A graph~$G$ is \emph{rigid} if any (equivalently, every) generic planar realization is rigid.
  Otherwise, the graph is called \emph{flexible}.
  A graph is \emph{minimally rigid} if it is rigid and the deletion of any edge yields a graph that is flexible.
\end{definition}

\subsection{Rigidity Invariants}

We study three rigidity invariants for minimally rigid graphs. 
Given a minimally rigid graph $G$, we consider:
(i)~the \emph{number of (complex) planar realizations} of $G$;
(ii)~the \emph{number of (complex) spherical realizations} of $G$; and
(iii)~the \emph{number of NAC-colorings} of $G$.

Given a generic planar realization $p$ of a minimally rigid graph $G=(V,E)$, we aim to count the realizations of $G$ that have the same edge lengths as~$p$, modulo rotations and translations. Note that this number is finite,
but it might vary between different choices of $p$ (see for instance~\cite{JacksonOwen2019}) as this corresponds to the number of solutions of the following polynomial system over the real numbers:
\begin{align}
  \label{eq:distances}
  (x_u - x_v)^2 + (y_u - y_v)^2 &= ||p(u)-p(v)||^2 \quad \forall uv \in E\,, \nonumber \\
  x_{\bar{u}} = y_{\bar{u}} = y_{\bar{v}} = 0\,, \quad x_{\bar{v}} &= ||p(\bar{u})-p(\bar{v})|| \,,
\end{align}
where $(x_u, y_u)$ denotes the coordinates of vertex $u\in V$ and
some edge $\bar{u}\bar{v}\in E$ is pinned down to factor out the rotations and translations.
However, the number of solutions over the complex numbers is the same for all generic realizations $p$
of the graph $G$~\cite[Theorem~3.6]{JacksonOwen2019} and upper bounds the number of (real) realizations.

\begin{definition}[The number of planar realizations]
  Let $G$ be a minimally rigid graph.
  The \emph{number of (complex) planar realizations} of $G$, denoted by $\numPlaneFun{G}$, is the number of complex solutions
  of \eqref{eq:distances} for any generic realization $p$ of $G$.
\end{definition}

Similarly as we have defined planar realizations, \emph{spherical} realizations and their rigidity
can be defined by replacing~$\RR^2$ by the $2$-dimensional sphere.
The generic rigidity behavior in the plane and the sphere is the same (see for instance \cite[Theorem~2.5]{DewarGrasegger}), namely,
a graph is minimally rigid in the plane if and only if it is minimally rigid on the sphere.
Hence, one may consider rigid graphs without specifying whether rigidity is studied in the plane or on the sphere.
A polynomial system of equations similar to $\eqref{eq:distances}$ corresponding
to spherical realizations can be formulated using inner products on the left hand side.
The number of complex solutions of the system is then called the \emph{number of spherical realizations} of a minimally rigid graph $G$
and is denoted by $\numSphereFun{G}$. See \cite[Section~4]{DewarGrasegger} for a rigorous definition.

Finally, we consider a third rigidity-related invariant ($\numNACinv$, also defined for flexible graphs), defined via certain two-color edge assignments as follows.

\begin{definition}[NAC-colorings]
  A surjective edge coloring of a graph $G$ by red and blue
  is called a \emph{NAC-coloring} if for every cycle of $G$, either all edges have the same color,
  or there are at least two blue and two red edges.
  Let $\numNACfun{G}$ denote the number of NAC-colorings of~$G$ divided by two (which corresponds to swapping the colors).
\end{definition}
NAC-colorings are interesting from the rigidity point of view, since they yield flexible realizations. More precisely, a connected graph has a planar realization with a flex if and only if it has a NAC-coloring~\cite{GraseggerLegerskySchicho2019}.
Interestingly, rigid graphs can have (non-generic) realizations with a flex. Different NAC-colorings give different flexes (see~\cite[Section~2.2]{clinch2024nac}), hence it is natural to ask for minimally rigid graphs with many NAC-colorings.

\subsection{Optimization Problem}

Fix $n \ge 2$ and let $\nGraphs$ be the set of minimally rigid graphs on $n$ vertices.
For a chosen rigidity invariant $\rigInv:\nGraphs \to \nonnegReals$ (e.g., $\rigInv=\numPlaneInv$, $\rigInv=\numSphereInv$, or $\rigInv=\numNACinv$),
we consider the following optimization problem:
\begin{equation}
\label{eq:combinatorial_obj}
G^\star \in \arg\max_{G \in \nGraphs} \rigInv(G).
\end{equation}
Since $\nGraphs$ grows super-exponentially (see \Cref{sec:superexp}) and evaluating $\rigInv(G)$ can be expensive, directly solving~\eqref{eq:combinatorial_obj}
is infeasible beyond small~$n$. Thus, we pursue an approximate solution via RL, by learning a policy that constructs high-reward graphs through sequential extensions.

\section{Methodology}

\subsection{Deep CEM for Minimally Rigid Graphs}

In our task of generating minimally rigid graphs with high realization counts, we propose a framework based on the Deep CEM. Classical CEM~\cite{rubinstein2004cross} iteratively samples candidates from a simple parametric distribution (e.g., a Gaussian) and updates the distribution to concentrate probability mass around the highest-performing (elite) solutions. Deep CEM extends this idea by replacing the simple parametric distribution with a \emph{stochastic policy network}~$\pi_\theta$, which can represent complex and structured distributions (e.g., over graph extensions). 
A \emph{stochastic policy network} with parameters $\theta$ can be seen as a mapping
\[
\pi_\theta: \mathcal{S} \to \Delta(\mathcal{A}),
\]
where $\mathcal{S}$ is the state space, $\mathcal{A}$ is the action space, and $\Delta(\mathcal{A})$ denotes the set of probability distributions over $\mathcal{A}$.  
For each state $s \in \mathcal{S}$, $\pi_\theta(\cdot \mid s)$ is a probability distribution over actions in that state, satisfying
$
\sum_{a \in \mathcal{A}} \pi_\theta(a \mid s) = 1.
$
Thus, the value $\pi_\theta(a \mid s)$ gives the probability of selecting action $a$ in state $s$.

In our setting, a state $s$ corresponds to an intermediate graph~$G_k$ in the construction process. Rolling out the  policy~$\pi_{\theta_t}$ (at step $t$) from the base graph $K_2$ (the clique on two vertices) many times produces construction sequences
\[
K_2 = G_2 \to G_3 \to \cdots \to G_n,
\]
where each transition $G_k \to G_{k+1}$ is obtained by sampling an extension from ${\pi_{\theta_t}(\cdot \mid G_k)}$. Extensions are particularly natural choices to define the action space  $\mathcal{A}$, because a graph~$G$ is minimally rigid if and \emph{only} if it can be obtained from $K_2$ by a finite sequence of $0$- and $1$-extensions (see e.g.~\cite[Theorems~19.2 and 19.11]{handbook}), whose definition we recall now.
Let $G=(V,E)$ be a graph, and let $z \notin V$ be a new vertex. If $u,v \in V$ are distinct vertices, then the graph
\[
    (V \cup \{z\},\, E \cup \{uz, vz\})
\]
is called a \emph{$0$-extension} of $G$.
If $u,v,w \in V$ are distinct vertices and $vw \in E$, then the graph
\[
    (V \cup \{z\},\, E \setminus \{vw\} \cup \{uz, vz, wz\})
\]
is called a \emph{$1$-extension} of $G$; for illustration, see Figure~\ref{fig:extensions}. Thus, for a transition from~$G_k$ to $G_{k+1}$, we define the action space $\mathcal{A}$ as the set of all $0$-extensions and $1$-extensions applicable to $G_k$.

\begin{figure}[ht]
\centering
\begin{tikzpicture}[scale=0.6]
    %--- 0-extension (left)
    \node[lvertex] (u0) at (0,0) {\scriptsize $u$};
    \node[lvertex] (v0) at (1.5,0) {\scriptsize $v$};
    \draw[graphIndication] (0.75,0) ellipse (1.3 and 0.7);

    % arrow
    \node[draw=none, fill=none] at (3,0) {$\longrightarrow$};

    %--- 0-extension (right)
    \node[lvertex] (u1) at (4.5,0) {\scriptsize $u$};
    \node[lvertex] (v1) at (6,0) {\scriptsize $v$};
    \node[lvertex] (w1) at (5.25,1.2) {\scriptsize $z$};
    \draw[graphIndication] (5.25,0) ellipse (1.3 and 0.7);
    \draw[edge] (w1)--(u1);
    \draw[edge] (w1)--(v1);

    \begin{scope}[xshift=10.5cm, yshift=3.4cm]
        %--- 1-extension (left)
        \node[lvertex] (u2) at (0,-3) {\scriptsize $u$};
        \node[lvertex] (v2) at (1.5,-3) {\scriptsize $v$};
        \node[lvertex] (w2) at (0.75,-4.2) {\scriptsize $w$};
        \draw[edge] (v2)--(w2);
        \draw[graphIndication] (0.75,-3.4) circle (1.3);
    
        % arrow
        \node[draw=none, fill=none] at (3,-3.3) {$\longrightarrow$};
    
        %--- 1-extension (right)
        \node[lvertex] (u3) at (4.5,-3) {\scriptsize $u$};
        \node[lvertex] (v3) at (6,-3) {\scriptsize $v$};
        \node[lvertex] (w3) at (5.25,-4.2) {\scriptsize $w$};
        \node[lvertex] (z3) at (5.25,-1.5) {\scriptsize $z$};
        \draw[graphIndication] (5.25,-3.4) circle (1.3);
        \draw[edge] (z3)--(u3);
        \draw[edge] (z3)--(v3);
        \draw[edge] (z3)--(w3);
        \draw[edge, dashed] (v3)--(w3);
    \end{scope}
\end{tikzpicture}
\caption{Illustration of the 0-extension (left) and 1-extension (right).
In the 1-extension, the edge $vw$ is removed (dashed).}
\label{fig:extensions}
\end{figure}

Having fixed the reward function $\rigInv\colon \nGraphs\rightarrow \nonnegReals$ for a given rigidity invariant (e.g., $\rigInv = \numPlaneInv$), we now describe how Deep CEM is instantiated in our setting; see Algorithm~\ref{alg:deep_cem_reduced}. 
For a given number of vertices $n$, and number of generations $T$, we maintain a population of $m$ constructions and iteratively refine a stochastic policy $\pi_\theta$ over $0$- and $1$-extensions. At generation~$t$, we start from the current survivor set $S_{t-1}$ and build a population~$P_t$. 
Each new candidate is obtained by rolling out the policy from the base graph~$K_2$: at step $k$ we sample an extension $a_k \sim \pi_{\theta_t}(\cdot \mid G_k)$, apply it to obtain~$G_{k+1}$, and repeat until we reach $G_n$. Because the action space consists only of $0$- and $1$-extensions, every sampled $G_n$ is minimally rigid by construction.

We then evaluate the reward $\rigInv(G)$ for all $G \in P_t$ and select an elite set $\Omega_t$ comprising the top $\lfloor m \rho_{\text{elite}}\rfloor$ graphs. From each $G \in \Omega_t$ we extract the sequence of state--action pairs $\{(G_k,a_k)\}_{k=2}^{n-1}$ and aggregate them into a dataset $\mathcal{D}_t$. Next, we update the policy parameters from $\theta_t$ to $\theta_{t+1}$ by minimizing for few epochs $\mathcal{L}(\theta_t)$ (Eq.~\ref{eq:entropy}) on $\mathcal{D}_t$ using Adam. The survivor set $S_t$ for the next generation is then taken as the top $\lfloor m \rho_{\text{surv}}\rfloor$ graphs in~$P_t$. After $T$ generations, the algorithm outputs
\[
\hat{G}^\star \gets \arg\max_{G \in S_T} \rigInv(G),
\]
an $n$-vertex minimally rigid graph with a large value of $\rigInv(G)$.

\begin{algorithm}
\caption{Deep CEM for Minimally Rigid Graphs}
\label{alg:deep_cem_reduced}
\begin{algorithmic}[1]
\Require number of vertices $n$; population size $m$; number of generations $T$;
survivor fraction $\rho_{\text{surv}}$; elite fraction $\rho_{\text{elite}}$;
reward function $\rigInv\colon \nGraphs \rightarrow \nonnegReals$
\State  $S_0 \gets \emptyset$ ; $\theta_1 \gets $ initialize policy parameters
\For{$t = 1$ to $T$}  \Comment{{for details, see \Cref{sec:rep}}}
    \State $P_t \gets S_{t-1}$ \Comment{current population of constructions}
    \While{$|P_t| < m$}
        \State $G_2 \gets K_2$
        \For{$k = 2$ to $n-1$}
            \State sample extension $a_k \sim \pi_{\theta_t}(\cdot \mid G_k)$
            \State $G_{k+1} \gets \mathrm{Apply}(G_k, a_k)$
        \EndFor
        \State add $G_n$ to $P_t$
    \EndWhile
    \State $\Omega_t \gets$ top $\lfloor m \rho_{\text{elite}}\rfloor$ graphs in $P_t$ by $\rigInv(G)$
    \State build dataset $\mathcal{D} = \bigcup_{G \in \Omega} \{(G_k, a_k)\}_{k=2}^{n-1}$
    %\State $\theta_{t+1} \gets$ gradient descent (Adam) on $\mathcal{L}(\theta_t)$ using $\mathcal{D}$
    \State $\theta_{t+1} \gets \mathrm{Update}(\theta_t;\mathcal{D},\mathcal{L})$ 
    \State $S_t \gets$ top $\lfloor m \rho_{\text{surv}}\rfloor$ graphs in $P_t$ by $\rigInv(G)$
\EndFor
\State \Return $\hat{G}^\star \gets \arg\max_{G \in S_t} \rigInv(G)$
\end{algorithmic}
\end{algorithm}

For computing $\mathcal{R} \in\{\numPlaneInv,\numSphereInv\}$,
we use the package \textsc{lnumber}~\cite{capco2024lnumber} which is based on~\cite{capco2018laman}
and~\cite{algSphere} respectively.
For $\numNACinv$, we use the algorithm~\cite{lastovicka2024nac} implemented in the package \textsc{PyRigi}~\cite{pyrigi}.

\paragraph{Remark.} During graph generation, we must score thousands of candidates to form the elite set $\Omega_t$ in each generation~$t$~(see Algorithm~\ref{alg:deep_cem_reduced}, line~12). Since exact evaluation of $\numPlaneInv$ and $\numSphereInv$ is costly (a prevailing view in rigidity theory community is that determining the number of realizations is inherently difficult and intrinsic to the nature of the problem, even if this has not been formally proved yet), we use the efficiently computable upper bound $\text{m-Bézout}(G)$~\cite{bartzos2020bezout} as a surrogate during sampling. In particular, $\text{m-Bézout}(G)$ upper-bounds the realization-count objectives, i.e.,
\[
\numPlaneInv(G) \leq \numSphereInv(G) \leq  \text{m-Bézout}(G)
\]
(see~\cite{DewarGrasegger} for the first inequality).
In practice, we first score all graphs in a generation with $\text{m-Bézout}(G)$ and then compute the true reward ($\numPlaneInv$ or $\numSphereInv$) only for the top $25\%$ candidates. This two-stage screening yields a significant speedup in our experiments while maintaining good agreement with the exact reward on the selected candidates. For details, we refer to \Cref{sec:surrogate}.

\subsection{Stochastic Policy Network Architecture}

In this section, we describe the architecture of our stochastic policy network, which is illustrated in Figure~\ref{fig:policy}. 

\paragraph{Graph Encoder.} The input to the policy at step $k$ is an intermediate (minimally rigid) graph $G_k = (V_k, E_k)$. Since the choice of extensions must depend only on the underlying unlabeled combinatorial structure, we require the policy to be permutation-equivariant: relabeling the vertices of an intermediate graph $G_k$ with permutation $\sigma$ permutes the policy’s output in the corresponding way:
\begin{align}
\pi_\theta(\cdot \mid \sigma(G_k)) = \sigma\big(\pi_\theta(\cdot \mid G_k)\big)
\label{eq:equiv}
\end{align}
Moreover, the rigidity invariants we study are themselves invariant under vertex relabeling.

Thus, a natural graph neural network encoder for this setting is the Graph Isomorphism Network (GIN)~\cite{xu2019gin}. GIN not only discriminates between different graph structures but also maps similar structures to similar embeddings, capturing dependencies between graph components. At a high level (we refer to the original work for details), a GIN layer aggregates features from a vertex’s neighbors, combines them with the vertex’s own features, and applies a learned transformation.
Formally, the hidden representation~$h_v^{(l)}$ of vertex $v$ at layer $l \in \{1,2,\dots, L_\mathcal{G}\}$ is given by
\[
h_v^{(l)} = \mathrm{MLP}^{(l)}\!\Bigl(\bigl(1 + \epsilon^{(l)}\bigr)\, h_v^{(l-1)} \;+\; \sum_{u \in \mathcal{N}(v)} h_u^{(l-1)}\Bigr),
\]
where $\mathcal{N}(v)$ denotes the neighbors of $v$, $\epsilon^{(l)}$ is a learnable scalar and $\mathrm{MLP}^{(l)}$ denotes a multilayer perceptron. The initial vertex features $h_v^{(0)}$ encode basic local structural information around each vertex and serve as the input to the first GIN layer.
Note that GIN may fail to capture certain substructures and to distinguish between specific classes of graphs (e.g., regular graphs, certain trees or other graphs that share the same degree multiset) when all vertices share the same feature values, because they were shown to be at most as powerful as the $1$-Weisfeiler–Lehman ($1$-WL) graph isomorphism test~\cite{huang2022wl}. 

\begin{figure}[t]
    \centering
    \includegraphics[width=\linewidth]{images/policy.pdf}
    \caption{Policy network architecture. Given an intermediate graph $G_k$, a GIN encoder produces vertex embeddings $\mathcal{H}$. 
    For each candidate $0$- or $1$-extension (defined by a vertex tuple), we construct an extension-level representation in $\mathcal{E}$ by combining the relevant vertex embeddings with extension-specific indicators (e.g., validity and move type). A shared extension-scoring head outputs a logit per candidate extension, and a softmax normalizes these logits to obtain action probabilities. In the figure, the learnable parameters of the policy $\pi_{\theta}$ are $\theta = (\theta_G,\theta_{\mathcal E})$, where $\theta_G$ parameterizes the GIN encoder and $\theta_{\mathcal E}$ parameterizes the extension-scoring head. The policy is permutation-equivariant: any relabeling of vertices induces the corresponding permutation of extension probabilities.}

    \label{fig:policy}
\end{figure}

To better expose rigidity-relevant structure to the encoder, we therefore augment vertices with explicit local structural features. In particular, prior work on rigidity invariants~\cite{grasegger2025explorations} observed recurring patterns in graphs with high $\numPlaneInv$ or $\numSphereInv$, including characteristic degree multisets, constraints on triangle counts, or degree- and Hamiltonicity-driven adjacency rules. These observations indicate that degree-related structural properties are highly relevant for characterizing graphs with favorable rigidity invariants.  Thus, for each vertex $v \in V_k$, we build a Local Degree Profile (LDP)~\cite{errica2018baseline} defined as
\begin{align*}
\LDP{v} = \Bigl(&
\deg(v), \;
\min_{u \in \mathcal{N}(v)} \deg(u), \;
\max_{u \in \mathcal{N}(v)} \deg(u), \; 
\mathop{\mathrm{mean}}\limits_{u \in \mathcal{N}(v)} \deg(u), \;
\mathop{\mathrm{std}}\limits_{u \in \mathcal{N}(v)} \deg(u)
\Bigr),    
\end{align*}
where $\deg(v)$ is the degree of vertex $v$. In summary, we define the initial vertex features as
\[
h_v^{(0)} = [\LDP{v} \,;\, \lambda_k \,;\, \kappa_v],
\]
where $\lambda_k \in \RR^{d}$ is a learnable embedding of the current construction step $k$ (making the policy step-aware), and $\kappa_v$ is the clustering coefficient of $v$, defined for unweighted graphs as the fraction of potential triangles through $v$ that are present:
\[
    \kappa_v = \frac{2\tau(v)}{\deg(v)\bigl(\deg(v) - 1\bigr)},
\]
with $\tau(v)$ denoting the number of triangles incident to $v$. Finally, we define the initial vertex features as $\mathcal{H}^{(0)} = (h_v^{(0)})_{v \in V_k }$ and denote by
$h_v = h_v^{(L_\mathcal{G})}$ the encoded representation of vertex $v$, i.e., the output of the final GIN layer.

\paragraph{Permutation-Equivariant Extensions Representation Module.} We first pass the initial vertex features through our encoder (GIN layers) and obtain vertex representations $\mathcal{H}=(h_v)_{v \in V_k }$.
Our goal is then, based on $\mathcal{H}$, to construct an architecture that assigns probabilities to each extension. Note that the extensions are defined in terms of specific vertex tuples (e.g., a $1$-extension on $(u,v,w)$); therefore, under a relabeling of vertices, the \emph{set} of valid extensions is
permuted accordingly. Consequently, for isomorphic graphs~$G$ and $G'$, the desired behavior is that the policy on $G'$ is a \emph{permutation} of the policy on $G$ (see~\eqref{eq:equiv}).

Thus, we must construct extension-level representations $\mathcal{E}$ from vertex embeddings~$\mathcal{H}$ so that action scores transform equivariantly under graph isomorphisms. Observe that, in an intermediate graph $G_k$, a candidate $1$-extension acting on vertices $u,v,w$ with removal of edge $vw$ can be encoded by~$(u,\{v,w\})$. For all distinct $u,v,w\in V_k$,
we consider
\[
e_{(u,\{v,w\})} = \bigl[\phi(h_u, h_v, h_w) \,;\, \psi(h_v, h_w) \,;\, \gamma_{(u,v,w)}\bigr] \in \mathcal{E},
\]
where
$\phi(\cdot)$ and $\psi(\cdot)$ are permutation-invariant functions (see~\cite{zaheer2017deepsets}), and $\gamma_{(u,v,w)}$ is a feature vector encoding additional properties of the extension.
If $vw\notin E_k$, we say that $(u, \{v,w\})$ is invalid,
otherwise $e_{(u,\{v,w\})}$ represents a $1$-extension.
Note that $0$-extensions can be modeled as a special case of the above construction by introducing a dummy vertex $\emptyset$ with embedding $h_{\emptyset} = \vec{0}$. A $0$-extension acting on vertices $u$ and $v$ is then represented as
\[
e_{(\emptyset,\{u,v\})} = \bigl[\phi(h_u, h_v, h_{\emptyset}) \,;\, \psi(h_{\emptyset}, h_{\emptyset}) \,;\, \gamma_{(u,v,\emptyset)}\bigr] \in \mathcal{E},
\]
so that both $0$- and $1$-extensions share the same extension representation space $\mathcal{E}$. Furthermore, $\gamma_{(u,v,w)} \in \{0,1\}^5$ collects five extension-specific features  
\[
\gamma_{(u,v,w)} = \bigl[
\mathbf{1}_{\text{invalid}}, 
\mathbf{1}_{E_0}, 
\mathbf{1}_{E_{1a}},
\mathbf{1}_{E_{1b}}, 
\mathbf{1}_{E_{1c}}
\bigr].
\]
where each component is a binary indicator:
$\mathbf{1}_{\text{invalid}}$ denotes whether $(u,\{v,w\})$ is invalid, 
$\mathbf{1}_{E_0}$ indicates a $0$-extension, and 
$\mathbf{1}_{E_{1a}}$, $\mathbf{1}_{E_{1b}}$, $\mathbf{1}_{E_{1c}}$ indicate the $1$-extension subclasses identified in~\cite{grasegger2025explorations}, which differ only in the number of edges connecting the three vertices.
Exactly one of the features $\mathbf{1}_{\text{invalid}},\mathbf{1}_{E_0},\mathbf{1}_{E_{1a}},\mathbf{1}_{E_{1b}},\mathbf{1}_{E_{1c}}$ is equal to $1$.

\paragraph{Remark.}
We encode extension validity via the binary feature $\mathbf{1}_{\text{invalid}}$ instead of removing invalid moves. This keeps a uniform extension representation and lets a shared extension-scoring MLP process all candidates, while learning to assign (near-) zero probability to those with $\mathbf{1}_{\text{invalid}}=1$. This soft treatment stabilizes training (invalid moves still receive gradients) and preserves permutation structure. In the rare case of an invalid extension is sampled, we simply discard it and resample, so the construction remains minimally rigid.

\paragraph{Extension-scoring Head.} Finally, we feed each $e_i \in \mathcal{E}$ through an $L_\mathcal{E}$-layer MLP, applied element-wise with shared parameters, so that the final layer produces a scalar logit $z_i$ for each extension while preserving equivariance. Collecting these into a vector $z \in \mathbb{R}^{|\mathcal{E}|}$ and applying a softmax,
\[
p_{(u,\{v,w\})} \;=\; \frac{\exp(z_{(u,\{v,w\})})}{\sum_{a \in V_k \cup \{\emptyset\}; b,c \in V_k} \exp(z_{(a,\{b,c\})})},
\]
yields a probability distribution $\mathbf{p}$ over candidate extensions, where $p_{(u,\{v,w\})}$ is the probability of selecting the extension indexed by $(u,\{v,w\})$ with $u\in V_k \cup \{\emptyset\}$ and $v, w \in V_k$.

\subsection{Loss Function}

A central challenge in reinforcement learning is achieving a suitable trade-off between \emph{exploration} and \emph{exploitation}. In our setting, this is particularly critical: we search for \emph{extremely rare} graph configurations, and insufficient exploration can cause the policy to converge prematurely to suboptimal structures. To explicitly address this requirement, we adopted Entropy regularization. 
This regularization technique augments the training objective (negative log-likelihood of elite actions) with an entropy term that penalizes overly confident action distributions:
\begin{align}
\mathcal{L}(\theta) = - \frac{1}{|\mathcal{D}|}\sum_{(G_j, a_j) \in \mathcal{D}} \log \pi_\theta(a_j \mid G_j) 
                    \;-\; \eta \,\frac{1}{|\mathcal{D}|}\sum_{(G_j, a_j) \in \mathcal{D}} H\bigl(\pi_\theta(\cdot  \mid G_j)\bigr),
\label{eq:entropy}
\end{align}
where $H\bigl(\pi_\theta(\cdot  \mid s)\bigr) = -\sum_{a \in \mathcal{A}} \pi_\theta(a \mid s) \log \pi_\theta(a \mid s)$ is the entropy of the action distribution at state $s$,
and $\eta\in\nonnegReals$ controls the strength of entropy regularization. As training progresses, the model acquires more information about which actions are beneficial. If entropy is kept too high, the policy may remain overly stochastic and fail to reliably construct high-quality graphs. To balance this, we gradually decrease the entropy coefficient over generations:
\begin{equation}
    \eta_t = \frac{\eta_0}{1 + \alpha \ln\!\left(1 + t \, e^{-\beta}\right)},
    \label{eq:ent_coef_decay}
\end{equation}
where $\eta_0$, $\alpha$, and $\beta$ are hyperparameters, and $t \leq T$ indexes the generation. This schedule encourages broad exploration in the early stages and progressively shifts the policy toward exploitation of the most promising graph constructions.
Although other methods can be similarly efficient in practice, entropy regularization has some unique properties. First, because it modifies the objective, it encourages the network to build good constructions, while being as uncertain in action choices as possible, which helps in creating diverse generations of good constructions. Second, the network learns to concentrate its `confidence budget' on terminal actions, optimizing the final steps to maximize rewards even when preceding actions were chosen randomly.

\section{Experiments}
\label{sec:experiments}

We evaluate our method on  $\numPlaneInv$, $\numSphereInv$, and $\numNACinv$. We focus on the range $10\leq n \leq 18$, matching the intervals most extensively studied in prior computational work and for which reference optima or best-known bounds are available. 
For small $n$, exhaustive enumeration provides ground-truth optima, whereas for larger $n$ we compare against the strongest previously reported values. For further details on reproducibility (including hardware specifications) and hyperparameter choices, we refer to \Cref{sec:rep} and the accompanying repository \cite{project-repo}.

\paragraph{Baselines and Metrics.} We compare our method in terms of (i) the best realization counts reported in the literature and (ii) runtime. For the number of realizations,
we compare against \cite{clinch2024nac,grasegger2025explorations},
which includes also the results obtained previously by~\cite{GraseggerKoutschanTsigaridas,capco2024lnumber}.
The timing for the results from~\cite{grasegger2025explorations} were obtained as follows:
for the results known to be optimal,
we measured the exhaustive search using~\cite{capco2024lnumber} 
except for the largest number of vertices, where the timing was extrapolated using the numbers of non-isomorphic minimally rigid graphs (from~\cite{nauty_plugin}).
Since~\cite{grasegger2025explorations} claims that \emph{several millions} of graphs with $n \geq 15$ were checked to get the indicated bounds on $\numPlaneInv$, without providing any timing,
we estimate the time of computing $\numPlaneInv$ for \emph{one million}  of graphs with 15 vertices
(using the average time to compute $\numPlaneFun{G}$ for $15$-vertex graphs $G$ we have encountered), similarly with half milion $14$-vertex graphs for $\numSphereInv$. Note this is a (very) conservative estimate; in reality it may be (much) higher.
For our model, to initialize parameters for the computation of $\numPlaneInv$ on $n=18$ vertices, 
we used the weights of the model trained for $n-1$ vertices, and similarly for $\numSphereInv$ for $n\in\{16, 18\}$; hence, we include this pretraining in the timing.

\paragraph{Results.} The results we obtained are summarized in \Cref{tab:results}.
Besides the timings, we indicate also for how many graphs the rigidity invariants were evaluated.
Remarkably, the number of graphs needed for training is by several orders of magnitude lower
than the number of minimally rigid graphs. For instance, for $n=15$, we evaluate $\numPlaneInv$ and $\numSphereInv$
only for less than $10^5$ out of the almost $10^{12}$ graphs.
Empirically, the number of minimally rigid graphs on $n+1$ vertices is expected to be at least $30\times$ larger than on $n$. However, we observe that the number of graph evaluations required by our method to find constructions with high rigidity invariants increases only mildly as $n$ grows. Thus, the longer computation times are caused by the (costly) reward evaluation for larger graphs, especially for $\numSphereInv$, rather than by the number of evaluated graphs (see~$N$ in \Cref{tab:results}).

For $\numPlaneInv$, we found exactly the same (i.e., isomorphic) graphs and therefore also the same values as~\cite{grasegger2025explorations}.
However, for larger $n$, our method finds them orders of magnitude faster than exhaustive search.
Note that the implementation~\cite{capco2024lnumber} is already optimized,
for instance, it skips graphs with a vertex of degree two as it is known these cannot attain the maximum.

\newgeometry{
  top=3cm,
  bottom=3cm,
}
\begin{landscape}
\thispagestyle{empty}
\begin{table}
\centering
{\footnotesize
\begin{tabular}{@{}ll@{}cccccccccc}
\toprule
& \# vertices $n$ &  10 & 11 & 12 & 13 & 14 & 15 & 16 & 17 & 18\\
& \# min. rigid graphs & 110 K & 2 M & 44 M & 1 G & 30 G & 932 G & - & - & -  \\
\midrule

\multirow{5}{*}{\rotatebox[origin=c]{90}{$\numPlaneInv$}} &
\cite{grasegger2025explorations} & {880}$^\star$ & {2\,288}$^\star$ & {6\,180}$^\star$ & {15\,536}$^\star$ & {42\,780}$^\star$ & 112\,752 & 312\,636 & 877\,960 & 2\,414\,388\\
& Time    & 1 s & 28 s & 1177 s & 17 h & \textcolor{gray}{510 h} & \textcolor{gray}{36 h} & - & - & - \\
\cmidrule(rl){2-11} 
& Ours  & 880$^\star$ & 2\,288$^\star$ & 6\,180$^\star$ & 15\,536$^\star$   & 42\,780$^\star$ & 112\,752 & 312\,636 & 877\,960 & 2\,414\,388 \\
& Time (Hit / Total)  & 42 s / -- & 66 s  / -- & 387 s  / -- & 137 s  / -- & 3208 s   / -- & 3 h  / 6 h & 8 h / 21 h & 33 h / 50 h & 83 h / 105 h\\
& $N$ (Hit / Total) & 2 K / --  & 2 K / --  & 9 K / --  & 3 K / --  & 29 K  / --  & 41 K / 76 K& 38 K / 87 K & 44 K / 62 K & 68 K / 72 K \\
\midrule
% sphere
\multirow{5}{*}{\rotatebox[origin=c]{90}{$\numSphereInv$}} &
\cite{grasegger2025explorations} & 1\,536$^\star$ & 4\,352$^\star$ & 12\,288$^\star$ & 34\,816$^\star$ & 98\,304 & 274\,432 & 815\,104 & 2\,195\,456 & -\\
& Time & 7 s & 470 s & 10 h & \textcolor{gray}{227 h} & \textcolor{gray}{158 h} & - & - & - & - \\
\cmidrule(rl){2-11} 
& Ours & 1\,536$^\star$ & 4\,352$^\star$ & 12\,288$^\star$ & 34\,816$^\star$ & 98\,304 & \textbf{278\,528} & \textbf{819\,200} & \textbf{2\,228\,224} & \textbf{6\,127\,616}\\
& Time (Hit / Total) & 10 s / -- & 89 s / -- & 1209 s / -- & 3 h / -- & 24 h / 41 h & 41 h / 139 h & 401 h / 446 h & 634 h / 890 h & 923 h / 1026 h\\
& $N$ (Hit / Total) & 1 K / --  & 3 K / -- & 13 K / -- & 34 K / -- & 70 K / 111 K & 24 K / 86 K & 123 K / 130 K & 12 K / 17 K & 17 K / 18 K\\
\midrule
% NAC
\multirow{5}{*}{\rotatebox[origin=c]{90}{$\numNACinv$}} &
\cite{clinch2024nac} & {307}$^\star$ & {639}$^\star$ & {1\,461}$^\star$ & 2\,923 & 7\,063 & 14\,127 & 35\,133 & 70\,267 & 180\,607 \\
& Time & 9 s & 259 s & 3 h & - & - & - & - & - & - \\
\cmidrule(rl){2-11} 
& Ours & 307$^\star$ & 639$^\star$ & 1\,461$^\star$ & \textbf{3\,125} & \textbf{7\,521} & \textbf{15\,963} & \textbf{37\,496} & \textbf{88\,257} & \textbf{199\,719} \\
& Time (Hit / Total) & 139 s / --  & 303 s / --  & 915 s / -- & 772 s / 2445 s & 2 h / 2 h & 2 h / 3 h & 4 h / 5 h & 5 h / 5 h & 8 h / 13 h \\
& $N$ (Hit / Total)  & 20 K / -- & 49 K  / -- & 110 K  / -- & 83 K / 195 K & 258 K / 258 K & 202 K / 265 K & 208 K / 240 K & 124 K / 132 K & 121 K / 134 K\\
\bottomrule

\end{tabular}}
\caption{Comparison of the achieved values for the considered rigidity invariants.
The numbers marked by star ($\star$) are the best possible,
otherwise, the numbers indicate the best found value.
\textbf{The bold values are new bounds, i.e., they were not previously discovered.}
\textcolor{gray}{The gray timings are lower bound estimates (see the text for details).}
We provide two timings for our method: the time at which the algorithm first identified the best-performing graph, and the total duration of the experiment, including the search time spent after the best graph was found. We indicate also the number of non-isomorphic graphs ($N$) for which the rigidity invariants were evaluated.
}
\label{tab:results}
\end{table}
\end{landscape}
\restoregeometry

In contrast to the planar case, for $15 \le n \le 18$ we find graphs with larger $\numSphereInv$ than~\cite{grasegger2025explorations}, \emph{establishing new best-known lower bounds for the maximum value of this invariant}. The $14$-vertex graph we found is isomorphic to the one reported in~\cite{grasegger2025explorations}. It is known that the maximum can be attained by several graphs. We found two different graphs on $15$ with the indicated number of spherical realizations, whereas for $n\ge 16$ we found a single new graph attaining our best value. Previously, multiple graphs attaining to the maximum have been reported only for $n\leq 13$, i.e., coming from exhaustive search.
The previously reported best graphs are Hamiltonian and have chromatic number three, minimum degree three and maximum degree four.
While for those the search might have been biased towards having these properties,
the graphs we have found possess them as well without imposing them.
Previously, the graphs contained at least two $3$-cycles, while
one of our $15$-vertex graphs and those with $16$, $17$ and $18$ vertices have even stronger property:
every vertex is in a $3$-cycle.

Regarding $\numNACinv$, \cite{clinch2024nac} does not explicitly provide graphs
with many NAC-colorings on $13$ to $17$ vertices.
Hence, we constructed them using the same method as the one with $18$ they provide.
Our method gives better graphs for all $13\leq n \leq 18$.
As expected, none of the graphs contains a $3$-cycle since its edges have to be of the same color.
Interestingly, our graphs on $13, 15$ and $16$ vertices can be constructed
from the complete bipartite graph on $3+3$ vertices using only $0$-extensions,
and we have observed such graphs also on $14$ and $17$ vertices
that have $\numNACinv$ close to the found maximum. 
The fact that these graphs can be constructed in similar manner could be potentially used
to obtain an infinite family of graphs with many NAC-colorings.
The graphs with $14$, $17$ and $18$ have non-trivial automorphism groups
and minimum degree three (i.e., the last step is a $1$-extension).
Instantiating \cite[Lemma~5.6]{clinch2024nac} with the $18$-vertex graph,
we get a new asymptotic lower bound $2.144^n$ on the maximum number of NAC-colorings.

We provide the certificate graphs for the improved bounds in \Cref{sec:certificate_graphs} and our repository~\cite{project-repo}.

\section{Related Works}

A fundamental work by~\cite{wagner2021constructions} introduced a general \textbf{reinforcement-learning framework for combinatorial constructions}, modeling them as sequential decision processes and showing that neural policies can effectively guide search toward high-quality mathematical objects. Building on this line of research, \cite{angileri2025analyzing} analyzed Wagner’s framework on Brouwer’s conjecture. Like us, they experimented with a Deep CEM-style scheme, but they target conjecture-driven graph search rather than rigidity optimization and do not require permutation-equivariant, extension-level action representation like ours. Furthermore, works such as \cite{patterboost} and \cite{az_extremal} propose effective methods for searching large extremal graphs (e.g., for Erd\H{o}s-type conjectures), using sophisticated architectures, including transformers, AlphaZero integrated with Pairformer modules, and metaheuristics like incremental Tabu search. However, these approaches are not well suited to our setting: they typically require generating and evaluating very large numbers of candidate graphs, which would be prohibitively here because computing rigidity invariants is the main computational bottleneck. More recently, \cite{georgiev2025mathatscale} demonstrated that combining evolutionary methods with large language models can enable mathematical discovery across a wide range of domains, albeit at the cost of relying on potentially expensive language-model infrastructure. In contrast, our approach is specialized to rigidity-theory invariants and provides an efficient and accurate search procedure without dependence on external models or sources.

To our best knowledge, no comparable machine learning-based approach has yet been applied to \textbf{optimizing rigidity invariants}. In a related vein, while we focus on complex realizations, \cite{bartzos2021maxreal} studied the number of real realizations using a heuristic-based search. For \emph{complex realization counts}, prior work has relied either on exhaustive search for small $n$~\cite{capco2024lnumber} or on large-scale computations guided by expert insight~\cite{GraseggerKoutschanTsigaridas,grasegger2025explorations}, similarly for $\numNACinv$ \cite{clinch2024nac}.

\section{Conclusion}

We introduced a learning approach to the extremal problem of finding minimally rigid graphs with large rigidity invariants. 
Empirically, we (i) recover the known optimal values for planar realization counts and (ii) establish improved best-known lower bounds for spherical realization counts (for $15\le n\le 18$) and for the number of NAC-colorings (for $13\le n\le 18$), while evaluating only a tiny fraction of all candidate graphs. Future work will focus on the structural features of the discovered extremal graphs and leverage them to derive new theoretical insights in rigidity theory and the extension of our framework to other invariants and larger $n$.

\section*{Acknowledgments}

This work was supported by the Student Summer Research Program 2025 of FIT CTU in Prague.
J.\,L.\ was supported by the Czech Science Foundation (GAČR), project No.\ 22-04381L.

\bibliographystyle{alphaurl}
\bibliography{refs_extended}

\clearpage

\newpage
\newpage

\appendix

\setcounter{figure}{0}
\setcounter{table}{0}
\setcounter{equation}{0}
\renewcommand{\thefigure}{A-\arabic{figure}}
\renewcommand{\thetable}{A-\arabic{table}}
\renewcommand{\theequation}{A-\arabic{equation}}

\section{Table of Notations}

We provide a comprehensive description of the notation in \Cref{tab:notation}.

\section{Certificate Graphs}
\label{sec:certificate_graphs}
The graphs certifying the new bounds reported in \Cref{tab:results}
for $\numSphereInv$ are in \Cref{tab:graphs_sphere} and \Cref{fig:graphs_sphere}.
The graphs for $\numNACinv$ are in \Cref{tab:graphs_NAC,tab:graphs_NAC_clinch} and \Cref{fig:graphs_NAC}.

We encode a graph by concatenating the rows of the upper triangle of its adjacency matrix
(excluding the diagonal) and interpreting the obtained binary string as an integer.
For decoding, for instance the method \texttt{Graph.from\_int} in \textsc{PyRigi}~\cite{pyrigi}
can be used.

\section{Reproducibility}
\label{sec:rep}

\paragraph{Implementation and Hardware Setup.} We implemented our method using PyTorch \cite{PyTorch} and PyTorch Geometric~\cite{fey2019pyg}. We used Hydra~\cite{Yadan2019Hydra} to manage hyperparameters and configurations, and MLflow~\cite{Zaharia_Accelerating_the_Machine_2018} to monitor experiments, store results, and track model performance. Our experiments were conducted on a virtualized machine with four processors Intel Xeon Gold 6254 providing (in total) 64 physical cores (hyper-threading disabled), a four-node NUMA architecture, and a base frequency of 3.10\,GHz with AVX-512 support, and with 32\,GB of RAM. All runtimes reported in our results table (see \Cref{tab:results} in the main paper) were measured on this same setup for both our method and the baselines. Although our approach is a deep learning method and would typically benefit from GPU acceleration, the computational bottleneck in our experiments is evaluating the invariants $\mathcal{R}(G)$. This evaluation is not significantly faster on GPUs, so we ran all experiments on CPUs.

\paragraph{Hyperparameter Selection.} The hyperparameters used in our experiments are as follows. We set the \textbf{evolutionary search parameters} \emph{to match our available hardware}. Unless stated otherwise, we run Deep CEM for up to $T=250$ generations (and $T=500$ for $\numNACinv$) with population size $m=1000$. For $(\mathcal{R},n)=(\numPlaneInv,18)$ we use $T<250$ due to the higher evaluation cost. For larger instances (e.g., $\numPlaneInv$ at $n=18$), we also apply an early-stopping strategy based on the rate of discovering new non-isomorphic graphs as a proxy for search productivity: if the number of previously unseen non-isomorphic graphs produced at generation $t$ falls below a threshold (500 for $\numPlaneInv$ and $\numSphereInv$, and 250 for $\numNACinv$), we stop the run early. This criterion is motivated by the empirical observation that once the discovery rate drops to such levels, the best invariant value typically plateaus and remains stable (see Figure~\ref{fig:early_stopping}). In each generation, we selected an elite fraction of $\rho_{\text{elite}}=0.064$ and retained a survivor fraction of $\rho_{\text{surv}}=0.016$ for the next generation. To reduce the cost of reward evaluation, we computed the chosen rigidity invariant for a fraction $\rho_{\text{main}}=0.256$ of graphs with the highest $m$-B\'ezout$(G)$ scores (setting $\rho_{\text{main}}=1$ for $\numNACinv$).

\begin{figure}
    \centering
    \includegraphics[width=0.8\linewidth]{images/rew_vs_graphs_cnt.pdf}
    \caption{{Search dynamics and early stopping.} For three representative runs ($n{=}13$ for $\numNACinv$, $n{=}15$ for $\numSphereInv$, $n{=}17$ for $\numPlaneInv$), we plot per generation $t$ the \textcolor{magenta}{\textbf{number of newly discovered non-isomorphic graphs}} (left axis) alongside the \textcolor{darkgreen}{\textbf{best-so-far rigidity invariant value}} (right axis; shown on a relative scale). The red dashed line marks the \textcolor{red}{\textbf{discovery-rate threshold}} (250 for $\numNACinv$, 500 for $\numPlaneInv$ and $\numSphereInv$). For large graphs, runs are terminated once the discovery rate drops below this threshold, as subsequent reward improvements were empirically rare.}

    \label{fig:early_stopping}
\end{figure}

\begin{figure}
    \centering
    \includegraphics[width=0.8\linewidth]{images/exploration_comparison.pdf}
    \caption{{Effect of entropy-regularization schedules when maximizing $\mathbf{\numSphereInv}$ (${n = 14}$).} \textbf{Left:} the entropy coefficient $\eta_t$ over generations for several decay schedules (including no regularization and constant $\eta$). \textbf{Right:} the \textbf{best $\numSphereInv$ achieved up to generation $t$} under each schedule, averaged over five independent runs.}

    \label{fig:explorations}
\end{figure}

\begin{table*}
\centering
{\footnotesize
\begin{tabular}{ll ccccc || ccccc}
\toprule
& & \multicolumn{5}{c}{To $\numPlaneInv$} & \multicolumn{5}{c}{To $\numSphereInv$} \\
\cmidrule(lr){3-7} \cmidrule(lr){8-12}
& $n$ & 10 & 11 & 12 & 13 & 14 & 10 & 11 & 12 & 13 & 14 \\
\midrule
\multirow{5}{*}{\rotatebox[origin=c]{90}{From} \rotatebox[origin=c]{90}{$\numPlaneInv$}} 
& 10 & \textbf{880} & 2\,276 & 5\,652 & 15\,234 & 38\,912 & \textbf{1\,536} & \textbf{4\,352} & 10\,240 & 28\,672 & 79\,872 \\
& 11 & \textbf{880} & \textbf{2\,288} & \textbf{6\,180} & \textbf{15\,536} & 39\,258 & \textbf{1\,536} & \textbf{4\,352} & 11\,264 & 31\,744 & 75\,776 \\
& 12 & \textbf{880} & 2\,276 & \textbf{6\,180} & 15\,268 & 40\,686 & \textbf{1\,536} & \textbf{4\,352} & 11\,776 & 27\,648 & 77\,824 \\
& 13 & \textbf{880} & 2\,276 & \textbf{6\,180} & \textbf{15\,536} & 40\,800 & \textbf{1\,536} & 4\,096 & 10\,752 & 29\,696 & 73\,728 \\
& 14 & \textbf{880} & 2\,276 & 5\,834 & \textbf{15\,536} & \textbf{42\,780} & \textbf{1\,536} & 3\,840 & 10\,752 & 28\,160 & 79\,360 \\
\midrule
\multirow{5}{*}{\rotatebox[origin=c]{90}{From} \rotatebox[origin=c]{90}{$\numSphereInv$}} 
& 10 & \textbf{880} & 2\,276 & 5640 & 15\,492 & 39\,336 & \textbf{1\,536} & \textbf{4\,352} & 11\,264 & 30\,720 & 79\,872 \\
& 11 & \textbf{880} & 2\,168 & 5748 & 14\,532 & 37\,586 & \textbf{1\,536} & \textbf{4\,352} & 11\,264 & 30\,720 & 86\,016 \\
& 12 & \textbf{880} & \textbf{2\,288} & 5\,952 & 14\,736 & 37\,744 & \textbf{1\,536} & \textbf{4\,352} & \textbf{12\,288} & 31\,744 & 88\,064 \\
& 13 & \textbf{880} & 2\,168 & 5\,704 & 14\,320 & 37\,084 & \textbf{1\,536} & \textbf{4\,352} & \textbf{12\,288} & \textbf{34\,816} & 92\,160 \\
& 14 & \textbf{880} & \textbf{2\,288} & 5\,952 & 15\,032 & 39\,000 & \textbf{1\,536} & \textbf{4\,352} & \textbf{12\,288} & \textbf{34\,816} & \textbf{98\,304} \\
\bottomrule
\end{tabular}
}
\caption{Transfer learning performance matrix. Rows correspond to models optimized to maximize either $\numPlaneInv$ or $\numSphereInv$ for a specific number of vertices $n$. Columns report the maximum reward ($\numPlaneInv$ or $\numSphereInv$) achieved when evaluating $10\,000$ non-isomorphic graphs of the size indicated by the column header. Models were trained until the empirical probability of generating a maximum-reward graph was at least $1$ in $1\,000$.}
\label{tab:transfer_learning_combined}
\end{table*}

Regarding the \textbf{entropy-regularization hyperparameters}, we empirically set them by inspecting the algorithm’s behavior on some small graphs, where the optimal values of the invariants are known. 
To illustrate the effect of entropy regularization on exploration, Figure~\ref{fig:explorations} compares several entropy schedules and their impact on the final performance when maximizing $\numSphereInv$ for $n=14$, averaged over five independent runs.
To make the differences between schedules more apparent, we evaluated only $6\%$ of the generated graphs (based on the surrogate values) using the main reward function. The results show that strong entropy regularization leads to underfitting, preventing the model from constructing the best graph, whereas removing entropy regularization altogether causes overfitting and premature convergence to a suboptimal policy. Only runs with a logarithmic decay of the entropy coefficient found the best graph in all observed runs. In contrast, runs without entropy regularization were faster, but did not consistently identified the best graph.

Thus, in \Cref{eq:ent_coef_decay}, we fixed $\alpha=6$ and $\beta=7$, and tuned $\eta_0$ as a function of the graph size (i.e., the number of vertices $n$) so that after learning for $T$ generations the model becomes sufficiently confident to repeatedly construct the best graph it has discovered. Empirically, to meet this constraint, with $\alpha=6$ and $\beta=7$, we observe that $\eta_0$ decreases approximately logarithmically with the size of the action space; we therefore calibrated $\eta_0$ on smaller instances and interpolated the resulting values for larger $n$.

For our \textbf{model architecture}, we set the number of layers of both the GIN and the extension-scoring head to $L_{\mathcal{G}}=L_{\mathcal{E}}=3$, and the dimension of the learnable step embedding to $d=2$. Each GIN layer uses an MLP with hidden layers of size $128$ and an output layer of size $32$. Likewise, the extension-scoring head is implemented with hidden layers of size $128$, but a single-neuron output layer that computes a logit for each considered extension in $\mathcal{E}$.  Since each~$e_{(u,\{v,w\})}$ is processed by the extension-representation and scoring module (an MLP with sufficient capacity) we instantiate the permutation-invariant set functions $\phi(\cdot)$ and $\psi(\cdot)$ directly as sum aggregators over their inputs.

Finally, we trained the model for four epochs on each batch of elite graphs (see Algorithm~1, line~14 in the main paper), using a learning rate of $lr=5\times 10^{-4}$. We also found that the choice of random seed was not critical: runs with different seeds and identical hyperparameter settings consistently converged to the same high-reward constructions, provided that exploration was sufficiently strong. For complete implementation details, obtained graphs, and reproducibility instructions, we refer to the project repository~\cite{project-repo}, which also reports the selected values of $\eta_0$ for different values of~$n$.

\section{Surrogate Efficiency and Variance Analysis}
\label{sec:surrogate}

The proposed approach was evaluated across multiple independent runs to verify its ability to consistently discover graphs with the maximum realization counts, and to assess the performance impact of utilizing $\mbezout(G)$ as a surrogate reward. As demonstrated in \Cref{tab:variance_results}, integrating the $\mbezout(G)$ yields an approximate $1.5\times$ computational speedup for the planar invariant ($\numPlaneInv$) and a $3\times$ speedup for the spherical invariant ($\numSphereInv$). Crucially, these efficiency gains do not compromise the search quality, as the algorithm successfully recovered the theoretically optimal invariant values across $10$ independent runs ($5$ runs for
$n = 15$).

\begin{table*}[ht]
\centering

% Define a custom command for the stacked mean and standard deviation.
\newcommand{\meanstd}[2]{\makecell{#1 \\[-1.2ex] \textcolor{gray}{\scriptsize $\pm #2$}}}

{\footnotesize
\begin{tabular}{clcccccc}
\toprule
& \# vertices $n$ &  10 & 11 & 12 & 13 & 14 & 15 \\
\midrule

% The [-1ex] fixes the vertical centering caused by the \addlinespace below
\multirow{5}{*}[-1ex]{\rotatebox[origin=c]{90}{$\numPlaneInv$}} &
Time (with m-Bézout)  & \meanstd{$63$}{14} & \meanstd{$160$}{103} & \meanstd{$262$}{162} & \meanstd{$637$}{252} & \meanstd{$3\,424$}{2\,429} & \meanstd{$10\,248$}{5\,627} \\
% The \hphantom acts as an invisible strut to force the column to the correct width
\hphantom{\rotatebox[origin=c]{90}{$\numPlaneInv$}} & 
$N$ (with m-Bézout)  & \meanstd{$2\,453$}{666} & \meanstd{$5\,043$}{3\,279} & \meanstd{$6\,093$}{3\,676} & \meanstd{$10\,573$}{4\,075} & \meanstd{$31\,923$}{21\,884} & \meanstd{$40\,243$}{20\,142} \\
\addlinespace
& Time (no m-Bézout)& \meanstd{$61$}{29} & \meanstd{$303$}{147} & \meanstd{$264$}{129} & \meanstd{$689$}{261} & \meanstd{$4\,815$}{2\,935} & \meanstd{$16\,635$}{6\,012} \\
& $N$ (no m-Bézout)& \meanstd{$6\,479$}{2\,415} & \meanstd{$23\,793$}{9\,367} & \meanstd{$21\,358$}{9\,264} & \meanstd{$37\,405$}{13\,369} & \meanstd{$113\,944$}{64\,097} & \meanstd{$123\,462$}{38\,328} \\

\midrule

\multirow{5}{*}[-1ex]{\rotatebox[origin=c]{90}{$\numSphereInv$}} &
Time (with m-Bézout) & \meanstd{$37$}{16} & \meanstd{$140$}{44} & \meanstd{$1\,398$}{636} & \meanstd{$11\,137$}{5\,572} & -- & -- \\
\hphantom{\rotatebox[origin=c]{90}{$\numSphereInv$}} & 
$N$ (with m-Bézout)  & \meanstd{$1\,382$}{736} & \meanstd{$3\,430$}{1\,127} & \meanstd{$14\,003$}{6\,471} & \meanstd{$28\,339$}{14\,418} & -- & -- \\
\addlinespace
& Time (no m-Bézout) & \meanstd{$43$}{16} & \meanstd{$239$}{93} & \meanstd{$3\,116$}{2\,002} & \meanstd{$36\,222$}{13\,056} & -- & -- \\
& $N$ (no m-Bézout)& \meanstd{$4\,308$}{1\,826} & \meanstd{$11\,609$}{4\,303} & \meanstd{$43\,879$}{27\,546} & \meanstd{$126\,314$}{46\,994} & -- & -- \\

\bottomrule
\end{tabular}}
\caption{Variance statistics (mean $\pm$ standard deviation) for execution time (in seconds) and the total number $N$ of evaluated graphs to reach the known optimum value across $10$~independent runs ($5$ runs for $n=15$). The table shows also the timings without using the m-Bézout bound approximation (25\% of graphs evaluated with true reward).}
\label{tab:variance_results}
\end{table*}

\section{Transfer Learning}
\label{sec:transfer_learning}
We also investigated the transfer-learning properties of our policy. Concretely, we trained a model to maximize a rigidity invariant $\mathcal{R}$ (e.g., $\mathcal{R}=\numPlaneInv$) on graphs with $n$ vertices, and then evaluated the same trained policy (without further fine-tuning) on a target invariant $\mathcal{R}'$ (e.g., $\mathcal{R}'=\numSphereInv$) with  graph size $n'$. \Cref{tab:transfer_learning_combined} summarizes these results: each row corresponds to a policy trained for a specific pair $(\mathcal{R},n)$, while columns report the best value achieved when deploying that policy to generate and evaluate $10{,}000$ non-isomorphic graphs for the target setting $(\mathcal{R}',n')$. 

We observe that \emph{in-domain} transfer (i.e., $\mathcal{R}=\mathcal{R}'$, such as $\numPlaneInv \to \numPlaneInv$ across different $n,n'$) is substantially stronger than \emph{cross-domain} transfer ($\numPlaneInv \to \numSphereInv$ and $\numSphereInv\to \numPlaneInv $). This has been expected since~\cite{grasegger2025explorations} reports that the graphs with maximum $\numPlaneInv$ do not necessarily maximize $\numSphereInv$ and the other way around.
Within the in-domain blocks, policies trained on larger instances often recover optimal values when evaluated on smaller graphs ($n>n'$): this holds for all transfers with $\mathcal{R}=\mathcal{R}'=\numSphereInv$, and for several cases with $\mathcal{R}=\mathcal{R}'=\numPlaneInv$. Interestingly, transfer appears more robust for the spherical invariant, even though $\numSphereInv$ is generally more computationally demanding. A plausible explanation is the two-stage evaluation strategy: the surrogate score m-B\'ezout$(G)$ correlates better (i.e., provides a tighter proxy) with $\numSphereInv$ than with $\numPlaneInv$, which may lead the policy to learn construction  that generalize more reliably across sizes.  In particular, $\text{m-Bézout}(G)$ upper-bounds the realization-count objectives, i.e.,
\[
\numPlaneInv(G) \leq \numSphereInv(G) \leq  \text{m-Bézout}(G)
\]
(see~\cite{DewarGrasegger} for the first inequality). Another important observation is that when transferring from a smaller  $n \in \{n_1,n_2\}$ with $n_1 < n_2 < n'$ and evaluating both policies on the same target size $n'$, the policy trained on $n_2$ generally discovers graphs with higher absolute rewards (although not necessarily optimal) than the policy trained on~$n_1$.

\begin{table*}
\centering
{\footnotesize
\begin{tabular}{llccccccccc}
\toprule
& \# vertices $n$ &  10 & 11 & 12 & 13 & 14 & 15 & 16 & 17 \\
\midrule

\multirow{3}{*}{\rotatebox[origin=c]{90}{$\numPlaneInv$}} &
\cite{grasegger2025explorations} & {880}$^\star$ & {2\,288}$^\star$ & {6\,180}$^\star$ & {15\,536}$^\star$ & {42\,780}$^\star$ & 112\,752 & 312\,636 & 877\,960 \\

& Ablated variant & {880}$^\star$ & {2\,288}$^\star$ & {6\,180}$^\star$ & {15\,536}$^\star$ & {42\,780}$^\star$ & 112\,752 & 308\,344 & 803\,432 \\

\cmidrule(rl){2-10} 
% \# tested graphs & & & & & & 7 M & 4 M & $<1$ M & 300 K\\[5pt]
& Ours  & 880$^\star$ & 2\,288$^\star$ & 6\,180$^\star$ & 15\,536$^\star$   & 42\,780$^\star$ & 112\,752 & 312\,636 & 877\,960  \\

\bottomrule

\end{tabular}}
\caption{Ablation of permutation-equivariance on $\numPlaneInv$. Stars ($^\star$) mark instances where the optimum is known from exhaustive enumeration.}

\label{tab:ablation}
\end{table*}

\section{Ablation}
\label{sec:ablation}

We implement our approach within the Deep CEM framework, using a stochastic policy network parameterized by a GIN graph encoder. The architecture is permutation-equivariant, so its action probabilities transform consistently under vertex relabelings of isomorphic graphs (as do the rigidity invariants considered in our study). Although equivariance is a natural inductive bias in this setting, we validate its practical impact via an ablation. Concretely, we fine-tune a non-equivariant Deep CEM baseline that takes as input a flattened upper-triangular adjacency representation and predicts an action distribution with a standard MLP. The fine-tuned MLP output is mapped to logits for all candidate $0$- and $1$-extensions and normalized with a softmax, matching the action space used by our main model. The results for $\numPlaneInv$ are shown in \Cref{tab:ablation}. We note that the ablated variant matches the optimum on smaller instances, but its performance degrades as $n$ grows (see $n\geq 16$). This suggests that explicitly enforcing permutation-equivariance becomes increasingly important as the action space and graph diversity expand (see also \Cref{sec:superexp}), and helps the policy generalize across isomorphic relabelings rather than overfitting to arbitrary vertex orderings.

Finally, we note that prior work~\cite{GraseggerKoutschanTsigaridas,grasegger2025explorations,clinch2024nac} relied on large-scale computations guided by expert insight. This is widely regarded as essential for obtaining strong bounds within practical time limits: naively sampling $0$- and $1$-extensions (or even with surrogate scores such as m-Bézout \emph{but without exploiting graph/extension structure}) is ineffective due to the enormous action space and the high cost of evaluating the resulting graphs. See also the discussion in the next section.

\section{Extension Impact}
\label{sec:extension_impact}

The effect on the number of realizations when applying a $0$- or $1$-extension
was studied in~\cite[Section~7.2]{grasegger2025explorations}.
While $\numSphereInv$ doubles when a $0$-extension is used, the factor might be in general both smaller
or greater than two for $1$-extensions. Since we see that the bounds always increase
more than by the factor of two, it suffices to consider only $1$-extensions.
The new best graphs we have found for $\numSphereInv$
cannot be obtained by $1$-extensions from each other, namely,
no graph on $n$ vertices can be obtained from a graph on $n-1$ vertices for $15 \leq n \leq 17$.
We have also computed all possible $1$-extensions of the best graphs on $14$ to $16$ vertices,
but none of them achieves the values we found using our model, see \Cref{tab:graphs_sphere_extensions}.

The fact that the best graphs are not $1$-extension of each other
could possibly explain that within domain transfer learning from smaller to larger $n$ usually does not
achieve the best values.

While the effect of $0$-extensions is well-understood for realization counts,
for $\numNACinv$ it can vary; hence, $0$-extensions have to be considered as well in this case.
For the new graphs we found, we again have that the graph on $n$ vertices cannot be obtained
by $0$- nor $1$-extension from the graph on $n-1$ vertices for $13\leq n \leq 18$.
Applying all possible $0$- and $1$-extensions to the graph on $n$ vertices
does not yield a graph with higher $\numNACinv$ either, see \Cref{tab:graphs_NAC_extensions}.

\section{Number of Minimally Rigid Graphs}
\label{sec:superexp}
The number of non-isomorphic minimally rigid graphs grows super-exponentially with the number of vertices,
since all minimally rigid graphs are exactly those constructed using $0$-extensions and $1$-extensions from $K_2$
(see e.g.~\cite[Theorems~19.2 and 19.11]{handbook}) and we prove that actually it is the case even when considering
only graphs constructed by $0$-extensions.

\begin{proposition}
    \label{final_bound}
    If $c_n$ is the number of non-isomorphic minimally rigid graphs with $n$ vertices that can be constructed using $0$-extensions starting from the graph $K_2$, then
    \[
    c_n \geq \frac{(n-2)!}{n2^{n-2}}\,.
    \]
    Particularly, $c_n$ grows super-exponentially.
\end{proposition}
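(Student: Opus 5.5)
Count labeled constructions, then divide by the size of isomorphism classes. Fix the vertex set $\{1,\dots,n\}$ and consider construction sequences $K_2=G_2\to G_3\to\dots\to G_n$, where $K_2$ is on vertices $1,2$ and at step $k$ (for $3\le k\le n$) vertex $k$ is added by a $0$-extension. Vertex $k$ is joined to an unordered pair of distinct earlier vertices, so there are $\binom{k-1}{2}$ choices. The number of such labeled sequences is therefore
\[
\prod_{k=3}^{n}\binom{k-1}{2}=\prod_{k=3}^{n}\frac{(k-1)(k-2)}{2}=\frac{(n-1)!\,(n-2)!}{2^{n-2}} .
\]
Each sequence determines a labeled graph on $\{1,\dots,n\}$. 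By the definition of $0$-extension, every graph obtained this way is minimally rigid, and it is counted by $c_n$.

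Next I bound how many sequences can produce the same isomorphism class. I will show that a fixed unlabeled graph $H$ arises from at most $n!$ sequences. A sequence is recoverable from its output labeled graph: vertex $k$'s two neighbours among $\{1,\dots,k-1\}$ are exactly its edges to smaller labels. This works because vertex $k$ gets exactly two edges to earlier vertices when added, and all later edges at $k$ go to larger labels. Hence the map from sequences to labeled graphs is injective. The labeled graphs isomorphic to $H$ number $n!/|\mathrm{Aut}(H)|\le n!$. So each class receives at most $n!$ sequences, and
\[
c_n\ \ge\ \frac{(n-1)!\,(n-2)!}{2^{n-2}\,n!}=\frac{(n-2)!}{n\,2^{n-2}} .
\]

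Finally, for super-exponential growth: for any fixed $C>0$, $(n-2)!/(n2^{n-2}C^n)\to\infty$, since $(n-2)!\ge ((n-2)/e)^{n-2}$ eventually dominates $n(2C)^n$, for example by Stirling or by comparing the ratio of consecutive terms, which tends to infinity.

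The only delicate step is the injectivity claim, namely that the labeling order determines the construction. I expect that to be the main point to state carefully, but it follows directly from edges being added only between the new vertex and older vertices. Everything else is a routine product computation.
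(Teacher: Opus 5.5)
Your proof is correct and takes essentially the same route as the paper: count the $\frac{(n-1)!\,(n-2)!}{2^{n-2}}$ label-ordered $0$-extension constructions, check that distinct constructions give distinct labeled graphs, and divide by the at most $n!$ labelings per isomorphism class. Your injectivity argument, which recovers vertex $k$'s two attachment points as its neighbours with smaller labels, is stated more carefully than the paper's, which only compares the choices made at a single step.
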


\begin{proof}
    Let $a_n$ be the number of all labeled minimally rigid graphs with vertices $1,\ldots, n$ constructed using $0$-extensions by iteratively adding vertices in the order given by the labels. Since we choose a pair of vertices from $\{1, \ldots, i\}$ for the $0$-extension adding vertex $i+1$,
    we have
    \begin{align*}
            a_n &= \prod_{i=2}^{n-1} \binom{i}{2}
            = \prod_{i=2}^{n-1} \frac{i(i-1)}{2} \\
            % &= \frac{(n-1)(n-2)}{2} \cdot \frac{(n-2)(n-3)}{2} \cdots \frac{2 \cdot 1}{2} \\
            &= \prod_{i=2}^{n-1}i \cdot \prod_{i=2}^{n-1}(i-1) \cdot \prod_{i=2}^{n-1}\frac{1}{2} \\
            &= (n-1)! \cdot (n-2)! \cdot \frac{1}{2^{n-2}}
    \end{align*}
    Note that choosing a different pairs of vertices, say $\{u,v\}$ and $\{u',v'\}$ with $u\notin \{u',v'\}$,
    when adding vertex $i+1$ yields different labeled graphs, since for the first choice the graph contains the edge $\{u,i+1\}$,
    but it is not the case for the second choice.
    
    If $b_n$ is the number of all labeled minimally rigid graphs with vertices $1,\ldots, n$ constructed using $0$-extensions
    (not necessarily following the order given by the labels), then
    \[
        a_n \leq b_n \leq n! \cdot c_n \,,
    \]
    where the second inequality follows from the fact that there are at most $n!$
    different labeled graphs isomorphic to each other.
    Using the expression for $a_n$ and dividing by $n!$ yields the desired inequality.

    For any $K>1$, we have $c_n \in \omega(K^n)$ since
    \[
        \lim_{n\rightarrow\infty} \frac{(n-2)!}{n2^{n-2}} \cdot \frac{1}{K^n} = \infty\,, 
    \]
    by the ratio test. 
    This concludes the proof of the statement.
\end{proof}

% \bibliographystyleS{named}
% \bibliographyS{refs}

\newpage

\begin{table*}[ht!]
    \centering
    \begin{tabular}{crr}
        $n$ & Integer representation of $G$ & $\numSphereFun{G}$ \\
        \midrule
         15 & 2000828459594098240497450525056 & 278\,528 \\
            & 22185205662832118156851245393968 & 278\,528 \\
         16 & 676317030175026185879559871219632902 & 819\,200 \\
         17 & 1708810961581179146514778090735835808768 & 2\,228\,224 \\
         18 & 5717703424785600896298030199603140199580763136 & 6\,127\,616
    \end{tabular}
    \caption[The integer representations of graphs maximizing $\numSphereInv$]{The integer representations of graphs on $15$ to $18$ vertices certifying the obtained bounds for $\numSphereInv$.}
    \label{tab:graphs_sphere}
\end{table*}

\begin{table*}[ht!]
    \centering
    \begin{tabular}{crrr}
        $n$ & Integer representation of $G'$ & $\numSphereFun{G'}$  & best $\numSphereInv$ \\
        \midrule
         15 & 755920348961494135657743057152 & 245\,760 & 278\,528 \\
         16 & 65557947275202461973507167815775232 & 688\,128 & 819\,200 \\
         17 & 44322443031028970019396990995970331435057 & 2\,113\,536 & 2\,228\,224 \\
    \end{tabular}
    \caption{The integer representations of graphs $G'$ on $15$ to $17$ vertices
    with the maximum $\numSphereInv$ among all graphs obtained by a $1$-extension
    from the best found graph $G$ (see \protect\Cref{tab:graphs_sphere}) with one vertex less.
    }
    \label{tab:graphs_sphere_extensions}
\end{table*}
\begin{figure*}
    \centering
        \begin{tabular}{ccccc}
% graph with 15 vertices:
\begin{tikzpicture}[scale=0.006]
    \node[smallvertex] (0) at (0, 210) {};
    \node[smallvertex] (1) at (60, -180) {};
    \node[smallvertex] (2) at (210, 30) {};
    \node[smallvertex] (3) at (-210, 30) {};
    \node[smallvertex] (4) at (-60, -180) {};
    \node[smallvertex] (5) at (-120, 150) {};
    \node[smallvertex] (6) at (-30, 120) {};
    \node[smallvertex] (7) at (120, 60) {};
    \node[smallvertex] (8) at (-180, -90) {};
    \node[smallvertex] (9) at (120, 150) {};
    \node[smallvertex] (10) at (180, -90) {};
    \node[smallvertex] (11) at (30, 120) {};
    \node[smallvertex] (12) at (-120, 60) {};
    \node[smallvertex] (13) at (90, -90) {};
    \node[smallvertex] (14) at (-90, -90) {};
    \draw[thinedge] (0) to (5) (0) to (6) (0) to (9) (0) to (11) (1) to (4) (1) to (10) (1) to (11) (1) to (13) (2) to (7) (2) to (9) (2) to (10) (2) to (14) (3) to (5) (3) to (8) (3) to (12) (3) to (13) (4) to (6) (4) to (8) (4) to (14) (5) to (6) (5) to (12) (7) to (8) (7) to (9) (8) to (14) (9) to (11) (10) to (12) (10) to (13);
\end{tikzpicture}
&
% graph with 15 vertices:
\begin{tikzpicture}[scale=0.006]
    \node[smallvertex] (0) at (120, -30) {};
    \node[smallvertex] (1) at (210, -30) {};
    \node[smallvertex] (2) at (30, 120) {};
    \node[smallvertex] (3) at (-210, -30) {};
    \node[smallvertex] (4) at (-150, -120) {};
    \node[smallvertex] (5) at (-60, -180) {};
    \node[smallvertex] (6) at (150, -120) {};
    \node[smallvertex] (7) at (120, 180) {};
    \node[smallvertex] (8) at (60, -180) {};
    \node[smallvertex] (9) at (0, 210) {};
    \node[smallvertex] (10) at (180, 90) {};
    \node[smallvertex] (11) at (-120, 180) {};
    \node[smallvertex] (12) at (-180, 90) {};
    \node[smallvertex] (13) at (-30, 120) {};
    \node[smallvertex] (14) at (-120, -30) {};
    \draw[thinedge] (0) to (1) (0) to (5) (0) to (6) (1) to (2) (1) to (6) (1) to (10) (2) to (7) (2) to (9) (3) to (4) (3) to (12) (3) to (13) (3) to (14) (4) to (5) (4) to (7) (4) to (14) (5) to (8) (5) to (10) (6) to (8) (6) to (11) (7) to (9) (7) to (10) (8) to (12) (8) to (14) (9) to (11) (9) to (13) (11) to (12) (11) to (13);
\end{tikzpicture}
&
% graph with 16 vertices:
\begin{tikzpicture}[scale=0.006]
    \node[smallvertex] (0) at (-125, 175) {};
    \node[smallvertex] (1) at (-25, 225) {};
    \node[smallvertex] (2) at (75, 225) {};
    \node[smallvertex] (3) at (-125, -100) {};
    \node[smallvertex] (4) at (150, 50) {};
    \node[smallvertex] (5) at (-25, -150) {};
    \node[smallvertex] (6) at (-75, -25) {};
    \node[smallvertex] (7) at (-175, 100) {};
    \node[smallvertex] (8) at (200, 0) {};
    \node[smallvertex] (9) at (-175, 0) {};
    \node[smallvertex] (10) at (125, -25) {};
    \node[smallvertex] (11) at (150, 175) {};
    \node[smallvertex] (12) at (200, 100) {};
    \node[smallvertex] (13) at (75, -150) {};
    \node[smallvertex] (14) at (150, -100) {};
    \node[smallvertex] (15) at (25, -75) {};
    \draw[thinedge] (0) to (1) (0) to (7) (0) to (10) (1) to (2) (1) to (10) (1) to (15) (2) to (6) (2) to (7) (2) to (11) (3) to (4) (3) to (5) (3) to (6) (3) to (9) (4) to (8) (4) to (9) (4) to (12) (5) to (6) (5) to (13) (5) to (15) (7) to (9) (7) to (11) (8) to (10) (8) to (12) (8) to (14) (10) to (14) (11) to (12) (11) to (13) (13) to (14) (13) to (15);
\end{tikzpicture}
&
% graph with 17 vertices:
\begin{tikzpicture}[scale=0.006*375/460]
    \node[smallvertex] (0) at (220, -140) {};
    \node[smallvertex] (1) at (-20, 160) {};
    \node[smallvertex] (2) at (40, 240) {};
    \node[smallvertex] (3) at (-80, -140) {};
    \node[smallvertex] (4) at (-200, 100) {};
    \node[smallvertex] (5) at (-160, -140) {};
    \node[smallvertex] (6) at (180, 200) {};
    \node[smallvertex] (7) at (-60, -220) {};
    \node[smallvertex] (8) at (160, -40) {};
    \node[smallvertex] (9) at (260, 80) {};
    \node[smallvertex] (10) at (-140, 20) {};
    \node[smallvertex] (11) at (40, -60) {};
    \node[smallvertex] (12) at (-200, -40) {};
    \node[smallvertex] (13) at (160, 80) {};
    \node[smallvertex] (14) at (260, -40) {};
    \node[smallvertex] (15) at (-120, 200) {};
    \node[smallvertex] (16) at (140, -220) {};
    \draw[thinedge] (0) to (6) (0) to (8) (0) to (14) (0) to (16) (1) to (2) (1) to (5) (1) to (9) (1) to (15) (2) to (3) (2) to (6) (2) to (15) (3) to (5) (3) to (7) (4) to (10) (4) to (11) (4) to (12) (4) to (15) (5) to (7) (5) to (12) (6) to (9) (6) to (13) (7) to (11) (7) to (16) (8) to (10) (8) to (11) (8) to (14) (9) to (13) (9) to (14) (10) to (12) (10) to (13) (11) to (16);
\end{tikzpicture} 
&
\begin{tikzpicture}[scale=0.006*375/495]
    \node[smallvertex] (0) at (90, 263) {};
    \node[smallvertex] (1) at (9, 209) {};
    \node[smallvertex] (2) at (-188, 92) {};
    \node[smallvertex] (3) at (180, 115) {};
    \node[smallvertex] (4) at (262, 58) {};
    \node[smallvertex] (5) at (236, -106) {};
    \node[smallvertex] (6) at (-216, 178) {};
    \node[smallvertex] (7) at (-20, -161) {};
    \node[smallvertex] (8) at (-266, 35) {};
    \node[smallvertex] (9) at (156, -198) {};
    \node[smallvertex] (10) at (-126, -211) {};
    \node[smallvertex] (11) at (-187, -35) {};
    \node[smallvertex] (12) at (-78, 265) {};
    \node[smallvertex] (13) at (148, -96) {};
    \node[smallvertex] (14) at (47, -230) {};
    \node[smallvertex] (15) at (187, 214) {};
    \node[smallvertex] (16) at (-232, -139) {};
    \node[smallvertex] (17) at (-112, 181) {};
    \draw[thinedge] (0) to (1) (0) to (11) (0) to (12) (0) to (15) (1) to (7) (1) to (12) (1) to (13) (2) to (4) (2) to (6) (2) to (8) (3) to (4) (3) to (9) (3) to (15) (3) to (17) (4) to (5) (4) to (15) (5) to (9) (5) to (10) (5) to (13) (6) to (8) (6) to (12) (6) to (17) (7) to (10) (7) to (11) (7) to (14) (8) to (11) (8) to (16) (9) to (13) (9) to (14) (10) to (14) (10) to (16) (11) to (16) (12) to (17);
\end{tikzpicture}
        \end{tabular}
    \caption[Graphs maximizing $\numSphereInv$]{The graphs on $15$ to $18$ vertices certifying the obtained bounds for $\numSphereInv$.
    The two left graphs have both 15 vertices.}
    \label{fig:graphs_sphere}
\end{figure*}

\clearpage

\begin{table*}
    \centering
    \begin{tabular}{crr}
        $n$ & Integer representation of $G$ & $\numNACfun{G}$ \\
        \midrule
         13 & 1817372602634323920930 & 3\,125 \\
         14 & 2178541080686613138604444182 & 7\,521 \\
         15 & 35514488197670496374812652340870 & 15\,963 \\
         16 & 88454699302609837679256749570852374 & 37\,496 \\
         17 & 43646696667421322394332935806613331125984 & 88\,257 \\
         18 & 44879647396852278983534873867663098247119872 & 199\,719 \\
    \end{tabular}
    \caption{The integer representations of graphs on $13$ to $18$ vertices certifying the obtained bounds for $\numNACinv$.}
    \label{tab:graphs_NAC}
\end{table*}

\begin{figure*}
    \centering
    \begin{tabular}{cccccc}
        % graph with 13 vertices:
        % Graph.from_int(1817372602634323920930)
        % Graph.from_vertices_and_edges([0, 1, 2, 3, 4, 5, 6, 7, 8, 9, 10, 11, 12], [(0, 8), (0, 9), (1, 2), (1, 4), (1, 9), (1, 11), (2, 5), (2, 6), (2, 7), (2, 10), (3, 8), (3, 11), (3, 12), (4, 5), (4, 6), (4, 7), (4, 10), (5, 8), (6, 9), (6, 12), (7, 11), (9, 10), (10, 12)])
        % placement:
        % {0: [-60, -120], 1: [-30, 60], 2: [-30, 0], 3: [60, 120], 4: [-90, 0], 5: [-60, -60], 6: [-90, 60], 7: [0, -60], 8: [60, -120], 9: [-150, 0], 10: [-150, 60], 11: [30, 0], 12: [-120, 120]}
        \begin{tikzpicture}[scale=0.007]
        	\node[smallvertex] (0) at (-60, -120) {};
        	\node[smallvertex] (1) at (-30, 60) {};
        	\node[smallvertex] (2) at (-30, 0) {};
        	\node[smallvertex] (3) at (60, 120) {};
        	\node[smallvertex] (4) at (-90, 0) {};
        	\node[smallvertex] (5) at (-60, -60) {};
        	\node[smallvertex] (6) at (-90, 60) {};
        	\node[smallvertex] (7) at (0, -60) {};
        	\node[smallvertex] (8) at (60, -120) {};
        	\node[smallvertex] (9) at (-150, 0) {};
        	\node[smallvertex] (10) at (-150, 60) {};
        	\node[smallvertex] (11) at (30, 0) {};
        	\node[smallvertex] (12) at (-120, 120) {};
        	\draw[thinedge] (0) to (8) (0) to (9) (1) to (2) (1) to (4) (1) to (9) (1) to (11) (2) to (5) (2) to (6) (2) to (7) (2) to (10) (3) to (8) (3) to (11) (3) to (12) (4) to (5) (4) to (6) (4) to (7) (4) to (10) (5) to (8) (6) to (9) (6) to (12) (7) to (11) (9) to (10) (10) to (12);
        \end{tikzpicture}
        &
        % graph with 14 vertices:
        % Graph.from_int(2178541080686613138604444182)
        % Graph.from_vertices_and_edges([0, 1, 2, 3, 4, 5, 6, 7, 8, 9, 10, 11, 12, 13], [(0, 1), (0, 2), (0, 3), (0, 8), (0, 10), (1, 4), (1, 5), (1, 13), (2, 5), (2, 7), (3, 4), (3, 5), (3, 13), (4, 6), (5, 8), (5, 10), (6, 7), (6, 9), (7, 11), (8, 9), (8, 12), (9, 10), (10, 12), (11, 12), (11, 13)])
        % placement:
        % {0: [-60, -180], 1: [-180, -90], 2: [0, 30], 3: [-180, -30], 4: [60, 30], 5: [60, -180], 6: [60, 120], 7: [0, 150], 8: [180, -30], 9: [120, 60], 10: [180, -90], 11: [-60, 120], 12: [-60, 30], 13: [-120, 60]}
        \begin{tikzpicture}[scale=0.007*330/350]
        	\node[smallvertex] (0) at (-60, -180) {};
        	\node[smallvertex] (1) at (-180, -90) {};
        	\node[smallvertex] (2) at (0, 60) {};
        	\node[smallvertex] (3) at (-180, -30) {};
        	\node[smallvertex] (4) at (60, 30) {};
        	\node[smallvertex] (5) at (60, -180) {};
        	\node[smallvertex] (6) at (60, 120) {};
        	\node[smallvertex] (7) at (0, 150) {};
        	\node[smallvertex] (8) at (180, -30) {};
        	\node[smallvertex] (9) at (120, 60) {};
        	\node[smallvertex] (10) at (180, -90) {};
        	\node[smallvertex] (11) at (-60, 120) {};
        	\node[smallvertex] (12) at (-60, 30) {};
        	\node[smallvertex] (13) at (-120, 60) {};
        	\draw[thinedge] (0) to (1) (0) to (2) (0) to (3) (0) to (8) (0) to (10) (1) to (4) (1) to (5) (1) to (13) (2) to (5) (2) to (7) (3) to (4) (3) to (5) (3) to (13) (4) to (6) (5) to (8) (5) to (10) (6) to (7) (6) to (9) (7) to (11) (8) to (9) (8) to (12) (9) to (10) (10) to (12) (11) to (12) (11) to (13);
        \end{tikzpicture}
        &
        % graph with 15 vertices:
        % Graph.from_int(35514488197670496374812652340870)
        % Graph.from_vertices_and_edges([0, 1, 2, 3, 4, 5, 6, 7, 8, 9, 10, 11, 12, 13, 14], [(0, 1), (0, 2), (0, 3), (0, 11), (1, 4), (1, 5), (2, 4), (2, 5), (2, 7), (2, 12), (3, 4), (3, 5), (3, 7), (3, 12), (4, 8), (4, 11), (5, 8), (6, 9), (6, 10), (7, 13), (7, 14), (8, 9), (9, 14), (10, 11), (10, 13), (12, 13), (12, 14)])
        % placement:
        % {0: [-60, 120], 1: [-180, 180], 2: [-120, 180], 3: [-60, 180], 4: [-120, 120], 5: [-180, 120], 6: [-30, -60], 7: [30, 240], 8: [-150, 60], 9: [-90, 0], 10: [30, 0], 11: [-90, 60], 12: [-30, 240], 13: [60, 60], 14: [0, 60]}
        \begin{tikzpicture}[scale=0.007]
        	\node[smallvertex] (0) at (-60, 120) {};
        	\node[smallvertex] (1) at (-180, 180) {};
        	\node[smallvertex] (2) at (-120, 180) {};
        	\node[smallvertex] (3) at (-60, 180) {};
        	\node[smallvertex] (4) at (-120, 120) {};
        	\node[smallvertex] (5) at (-180, 120) {};
        	\node[smallvertex] (6) at (-30, -60) {};
        	\node[smallvertex] (7) at (30, 240) {};
        	\node[smallvertex] (8) at (-150, 60) {};
        	\node[smallvertex] (9) at (-90, 0) {};
        	\node[smallvertex] (10) at (30, 0) {};
        	\node[smallvertex] (11) at (-90, 60) {};
        	\node[smallvertex] (12) at (-30, 240) {};
        	\node[smallvertex] (13) at (60, 60) {};
        	\node[smallvertex] (14) at (0, 60) {};
        	\draw[thinedge] (0) to (1) (0) to (2) (0) to (3) (0) to (11) (1) to (4) (1) to (5) (2) to (4) (2) to (5) (2) to (7) (2) to (12) (3) to (4) (3) to (5) (3) to (7) (3) to (12) (4) to (8) (4) to (11) (5) to (8) (6) to (9) (6) to (10) (7) to (13) (7) to (14) (8) to (9) (9) to (14) (10) to (11) (10) to (13) (12) to (13) (12) to (14);
        \end{tikzpicture}
        &
        % graph with 16 vertices:
        % Graph.from_int(88454699302609837679256749570852374)
        % Graph.from_vertices_and_edges([0, 1, 2, 3, 4, 5, 6, 7, 8, 9, 10, 11, 12, 13, 14, 15], [(0, 4), (0, 8), (0, 13), (1, 2), (1, 5), (1, 8), (1, 9), (1, 10), (1, 12), (2, 3), (2, 7), (2, 15), (3, 5), (3, 8), (3, 9), (3, 10), (3, 12), (4, 6), (5, 11), (6, 7), (6, 11), (7, 10), (9, 11), (9, 14), (10, 15), (11, 12), (12, 14), (13, 14), (13, 15)])
        % placement:
        % {0: [-120, -270], 1: [-120, -90], 2: [-30, -150], 3: [-60, -90], 4: [-60, -270], 5: [0, -30], 6: [30, -210], 7: [-90, -210], 8: [-150, -150], 9: [-60, -30], 10: [-90, -150], 11: [0, -90], 12: [-120, -30], 13: [-210, -150], 14: [-150, 30], 15: [-150, -210]}
        \begin{tikzpicture}[scale=0.007]
        	\node[smallvertex] (0) at (-120, -270) {};
        	\node[smallvertex] (1) at (-120, -90) {};
        	\node[smallvertex] (2) at (-30, -150) {};
        	\node[smallvertex] (3) at (-60, -90) {};
        	\node[smallvertex] (4) at (-60, -270) {};
        	\node[smallvertex] (5) at (-60, -30) {};
        	\node[smallvertex] (6) at (30, -210) {};
        	\node[smallvertex] (7) at (-90, -210) {};
        	\node[smallvertex] (8) at (-150, -150) {};
        	\node[smallvertex] (9) at (0, -30) {};
        	\node[smallvertex] (10) at (-90, -150) {};
        	\node[smallvertex] (11) at (0, -90) {};
        	\node[smallvertex] (12) at (-120, -30) {};
        	\node[smallvertex] (13) at (-210, -150) {};
        	\node[smallvertex] (14) at (-150, 30) {};
        	\node[smallvertex] (15) at (-150, -210) {};
        	\draw[thinedge] (0) to (4) (0) to (8) (0) to (13) (1) to (2) (1) to (5) (1) to (8) (1) to (9) (1) to (10) (1) to (12) (2) to (3) (2) to (7) (2) to (15) (3) to (5) (3) to (8) (3) to (9) (3) to (10) (3) to (12) (4) to (6) (5) to (11) (6) to (7) (6) to (11) (7) to (10) (9) to (11) (9) to (14) (10) to (15) (11) to (12) (12) to (14) (13) to (14) (13) to (15);
        \end{tikzpicture}
        &
        % graph with 17 vertices:
        % Graph.from_int(43646696667421322394332935806613331125984)
        % Graph.from_vertices_and_edges([0, 1, 2, 3, 4, 5, 6, 7, 8, 9, 10, 11, 12, 13, 14, 15, 16], [(0, 1), (0, 10), (0, 14), (1, 4), (1, 16), (2, 6), (2, 8), (2, 9), (2, 10), (2, 11), (2, 12), (3, 7), (3, 9), (3, 14), (4, 6), (4, 7), (5, 6), (5, 8), (5, 9), (5, 10), (5, 11), (5, 12), (7, 15), (8, 13), (8, 15), (11, 13), (11, 16), (12, 13), (12, 15), (12, 16), (13, 14)])
        % placement:
        % {0: [180, 30], 1: [180, -150], 2: [-30, -150], 3: [-180, 30], 4: [0, -240], 5: [30, -150], 6: [0, -180], 7: [-180, -150], 8: [-120, -90], 9: [-60, -60], 10: [60, -60], 11: [120, -90], 12: [0, -30], 13: [0, 30], 14: [0, 90], 15: [-90, 0], 16: [90, 0]}
        \begin{tikzpicture}[scale=0.007*330/350]
        	\node[smallvertex] (0) at (180, 30) {};
        	\node[smallvertex] (1) at (180, -150) {};
        	\node[smallvertex] (2) at (-30, -150) {};
        	\node[smallvertex] (3) at (-180, 30) {};
        	\node[smallvertex] (4) at (0, -240) {};
        	\node[smallvertex] (5) at (30, -150) {};
        	\node[smallvertex] (6) at (0, -180) {};
        	\node[smallvertex] (7) at (-180, -150) {};
        	\node[smallvertex] (8) at (-120, -90) {};
        	\node[smallvertex] (9) at (-60, -60) {};
        	\node[smallvertex] (10) at (60, -60) {};
        	\node[smallvertex] (11) at (120, -90) {};
        	\node[smallvertex] (12) at (0, -30) {};
        	\node[smallvertex] (13) at (0, 30) {};
        	\node[smallvertex] (14) at (0, 90) {};
        	\node[smallvertex] (15) at (-90, 0) {};
        	\node[smallvertex] (16) at (90, 0) {};
        	\draw[thinedge] (0) to (1) (0) to (10) (0) to (14) (1) to (4) (1) to (16) (2) to (6) (2) to (8) (2) to (9) (2) to (10) (2) to (11) (2) to (12) (3) to (7) (3) to (9) (3) to (14) (4) to (6) (4) to (7) (5) to (6) (5) to (8) (5) to (9) (5) to (10) (5) to (11) (5) to (12) (7) to (15) (8) to (13) (8) to (15) (11) to (13) (11) to (16) (12) to (13) (12) to (15) (12) to (16) (13) to (14);
        \end{tikzpicture}
        &
        % graph with 18 vertices:
        % Graph.from_int(44879647396852278983534873867663098247119872)
        % Graph.from_vertices_and_edges([0, 1, 2, 3, 4, 5, 6, 7, 8, 9, 10, 11, 12, 13, 14, 15, 16, 17], [(0, 8), (0, 16), (0, 17), (1, 4), (1, 5), (1, 9), (1, 11), (1, 12), (1, 14), (1, 15), (1, 16), (2, 6), (2, 10), (2, 11), (2, 17), (3, 4), (3, 5), (3, 9), (3, 11), (3, 12), (3, 14), (3, 15), (3, 16), (4, 13), (5, 6), (6, 13), (7, 8), (7, 10), (7, 14), (8, 9), (8, 13), (10, 15), (12, 17)])
        % placement:
        % {0: [105, -105], 1: [35, -35], 2: [-140, 35], 3: [35, 105], 4: [70, 140], 5: [0, 140], 6: [-70, 175], 7: [105, -175], 8: [210, 35], 9: [105, 70], 10: [-35, -175], 11: [-35, 70], 12: [-35, 0], 13: [140, 175], 14: [70, -70], 15: [0, -70], 16: [105, 0], 17: [-35, -105]}
        \begin{tikzpicture}[scale=0.007*300/350]
        	\node[smallvertex] (0) at (105, -105) {};
        	\node[smallvertex] (1) at (35, -35) {};
        	\node[smallvertex] (2) at (-140, 35) {};
        	\node[smallvertex] (3) at (35, 105) {};
        	\node[smallvertex] (4) at (70, 140) {};
        	\node[smallvertex] (5) at (0, 140) {};
        	\node[smallvertex] (6) at (-70, 175) {};
        	\node[smallvertex] (7) at (105, -175) {};
        	\node[smallvertex] (8) at (210, 35) {};
        	\node[smallvertex] (9) at (105, 70) {};
        	\node[smallvertex] (10) at (-35, -175) {};
        	\node[smallvertex] (11) at (-35, 70) {};
        	\node[smallvertex] (12) at (-35, 0) {};
        	\node[smallvertex] (13) at (140, 175) {};
        	\node[smallvertex] (14) at (70, -70) {};
        	\node[smallvertex] (15) at (0, -70) {};
        	\node[smallvertex] (16) at (105, 0) {};
        	\node[smallvertex] (17) at (-35, -105) {};
        	\draw[thinedge] (0) to (8) (0) to (16) (0) to (17) (1) to (4) (1) to (5) (1) to (9) (1) to (11) (1) to (12) (1) to (14) (1) to (15) (1) to (16) (2) to (6) (2) to (10) (2) to (11) (2) to (17) (3) to (4) (3) to (5) (3) to (9) (3) to (11) (3) to (12) (3) to (14) (3) to (15) (3) to (16) (4) to (13) (5) to (6) (6) to (13) (7) to (8) (7) to (10) (7) to (14) (8) to (9) (8) to (13) (10) to (15) (12) to (17);
        \end{tikzpicture}
    \end{tabular}
    \caption{The graphs on $13$ to $18$ vertices certifying the obtained bounds for $\numNACinv$.}
    \label{fig:graphs_NAC}
\end{figure*}

\begin{table*}
    \centering
    \begin{tabular}{crr}
        $n$ & $G$ & $\numNACfun{G}$ \\
        \midrule
        13 & 170363797095532441635376 & 2\,923 \\
        14 & 1395360292174978547951223617 & 7\,063 \\
        15 & 22859454182150718848230338095108 & 14\,127 \\
        16 & 749023707617915212187976649078898721 & 35\,133 \\
        17 & 49086874595737144883235931874747612135940 & 70\,267
    \end{tabular}
    \caption{The integer representations of graphs on $13$ to $17$ vertices constructed using the approach in \protect\cite{clinch2024nac}.}
    \label{tab:graphs_NAC_clinch}
\end{table*}

\begin{table*}
    \centering
    \begin{tabular}{crrr}
        $n$ & Integer representation of $G'$ & $\numNACfun{G'}$  & best $\numNACinv$ \\
        \midrule
         13 & 189565301677203464126464 & 2\,923  & 3\,125 \\
         14 & 14999728119619681459012112 & 6\,656 & 7\,521 \\
         15 & 35692752091932812077244995486002 & 15\,763 & 15\,963 \\
         16 & 1163731142807089982295744369657991216 & 37\,207 & 37\,496 \\
         17 & 5796200596001745654800091751920657580344 & 81\,042 & 88\,257 \\
         18 & 5720848934857304615415085872018485039436749312 &  184\,592 & 199\,719 \\
    \end{tabular}
    \caption{The integer representations of graphs $G'$ on $13$ to $18$ vertices
    with the maximum $\numNACinv$ among all graphs obtained by a $0$- or $1$-extension
    from the best found graph $G$ (see \protect\Cref{tab:graphs_NAC}) with one vertex less.
    We indicate only one graph $G'$ attaining the maximum, but for $n=13$, there are three of them.}
    \label{tab:graphs_NAC_extensions}
\end{table*}

\newpage
\clearpage
\pagestyle{empty}

\begin{table*}
\centering
{\scriptsize
\begin{tabular}{@{}l@{} p{0.86\linewidth}@{}}
\toprule
\textbf{Variable} & \textbf{Description} \\
\midrule

\multicolumn{2}{@{}l}{\textbf{Graphs, realizations, and invariants}}\\
$G=(V,E)$ & Simple undirected graph with vertex set $V$ and edge set $E$.\\
$n = |V|$ & Number of vertices of $G$.\\
$|E|$ & Number of edges of $G$.\\
$uv \in E$ & Undirected edge between vertices $u$ and $v$ (same as $vu\in E$).\\
$K_n$ & Clique (complete graph) on $n$ vertices.\\
$K_2$ & Base graph (clique on two vertices) used to start the extension construction.\\
$p\colon V\to \RR^2$ & Planar realization (placement) of the vertices in the plane.\\
$(x_u,y_u)$ & Coordinates of vertex $u$ in a realization variable assignment.\\
$\bar{u}\bar{v}\in E$ & Pinned edge used to factor out translations and rotations in \eqref{eq:distances}.\\
$\numPlaneFun{G}$ & Number of (complex) planar realizations of a minimally rigid graph $G$.\\
$\numSphereFun{G}$ & Number of (complex) spherical realizations of a minimally rigid graph $G$.\\
$\numNACfun{G}$ & Number of NAC-colorings of $G$ divided by $2$ (color swap quotient).\\
$\rigInv$ & Generic reward/invariant map $\rigInv:\nGraphs\to\nonnegReals$ (e.g., $\numPlaneInv,\numSphereInv,\numNACinv$).\\
$\nGraphs$ & Set of minimally rigid graphs on $n$ vertices.\\
$\text{m-B\'ezout}(G)$ & Efficiently computable upper bound used as a surrogate score during screening.\\

\addlinespace
\multicolumn{2}{@{}l}{\textbf{Extensions (actions) and construction process}}\\
$G_k=(V_k,E_k)\,\,$ & Intermediate graph after $k$ vertices have been constructed.\\
$0$-extension & Adds a new vertex $z$ and edges $uz,vz$ for distinct $u,v\in V_k$.\\
$1$-extension & Adds a new vertex $z$, removes an edge $vw\in E_k$, and adds edges $uz,vz,wz$ for distinct $u,v,w\in V_k$.\\
$\mathcal{S}$,\ $\mathcal{A}$ & State space (intermediate graphs) and action space (candidate extensions).\\
$\lambda_k$ & State at step $k$ (here: the intermediate graph $G_k$).\\
$a_k$ & Action at step $k$ (chosen extension applied to $G_k$).\\

\addlinespace
\multicolumn{2}{@{}l}{\textbf{Deep CEM / RL notation}}\\
$\pi_\theta$ & Stochastic policy network mapping states to distributions over actions.\\
$\Delta(\mathcal{A})$ & Probability simplex over the action space $\mathcal{A}$.\\
$\theta$ & Learnable parameters of the policy network.\\
$\theta_G,\ \theta_{\mathcal{E}}$ & Parameters of the GIN encoder and the extension-scoring head, respectively (Fig.~\ref{fig:policy}).\\
$m$ & Population size (number of constructions per generation).\\
$T$ & Number of generations in Deep CEM.\\
$P_t$ & Population of candidate graphs at generation $t$.\\
$S_t$ & Survivor set carried to generation $t+1$.\\
$\Omega_t$ & Elite set (top-performing candidates) at generation $t$.\\
$\rho_{\text{elite}}$ & Elite fraction selected in each generation.\\
$\rho_{\text{surv}}$ & Survivor fraction retained to the next generation.\\
$\rho_{\text{main}}$ & Fraction of top m-B\'ezout candidates evaluated with the main reward/invariant.\\
$\mathcal{D}$ & Dataset of state--action pairs extracted from elite rollouts.\\
$\mathcal{L}(\theta)$ & Training loss (negative log-likelihood plus entropy regularization), see \eqref{eq:entropy}.\\

\addlinespace
\multicolumn{2}{@{}l}{\textbf{GNN encoder and features}}\\
$L_{\mathcal{G}}$ & Number of GIN layers.\\
$L_{\mathcal{E}}$ & Number of layers in the extension-scoring head (MLP).\\
$h_v^{(l)}$ & Hidden representation of vertex $v$ at GIN layer $l$.\\
$\epsilon^{(l)}$ & Learnable scalar in the GIN aggregation at layer $l$.\\
$\mathrm{MLP}^{(l)}$ & Layer-specific MLP used inside the $l$-th GIN layer.\\
$\mathcal{N}(v)$ & Open neighborhood of vertex $v$.\\
$\deg(v)$ & Degree of vertex $v$.\\
$\LDP{v}$ & Local Degree Profile feature vector for vertex $v$.\\
$s_k\in\RR^{d}$ & Learnable step embedding (step-aware feature); $d$ is its dimension.\\
$\kappa_v$ & Clustering coefficient of $v$ (triangle density around $v$).\\
$\tau(v)$ & Number of triangles incident to vertex $v$.\\
$\mathcal{H}^{(0)}$ & Collection of initial vertex features $(h_v^{(0)})_{v\in V_k}$.\\

\addlinespace
\multicolumn{2}{@{}l}{\textbf{Extension representation and scoring}}\\
$\mathcal{E}$ & Extension-level representations.\\
$e_{(u,\{v,w\})}\in\mathcal{E}$ & Representation of a candidate extension indexed by a vertex tuple (validity encoded via indicators).\\
$\phi(\cdot),\ \psi(\cdot)$ & Permutation-invariant set functions used to build extension-level features from vertex embeddings.\\
$\gamma_{(u,v,w)}$ & Binary indicators (in $\{0,1\}^5$) for invalid / $0$-extension / $1$-extension subclasses.\\
$z_i$ & Logit score produced by the extension-scoring head for extension $i$.\\
$p_{(u,\{v,w\})}$ & Softmax probability assigned to the candidate extension indexed by $(u,\{v,w\})$.\\

\addlinespace
\multicolumn{2}{@{}l}{\textbf{Entropy regularization}}\\
$H(\pi_\theta(\cdot\mid s))$ & Entropy of the action distribution at state $s$.\\
$\eta$ & Entropy regularization coefficient in \eqref{eq:entropy}.\\
$\eta_t$ & Entropy coefficient at generation $t$ (decay schedule), see \eqref{eq:ent_coef_decay}.\\
$\eta_0,\ \alpha,\ \beta$ & Hyperparameters controlling the decay of $\eta_t$.\\

\bottomrule
\end{tabular}
}
\caption{Summary of the main notation used throughout the paper.}
\label{tab:notation}
\end{table*}

\thispagestyle{empty}

\end{document}